\documentclass{l4dc2025}

\title[Live LTL Progress Tracking]{Live LTL Progress Tracking: Towards Task-Based Exploration}
\usepackage{times}
\usepackage{algorithm2e}
\usepackage{textcomp}
\usepackage{xcolor}
\usepackage{tcolorbox}
\usepackage{url}
\usepackage{dsfont}
\usepackage{amssymb}
\usepackage{booktabs}

\DeclareMathOperator*{\argmax}{arg\max}

 \coltauthor{\Name{Noel C. Brindise} \Email{nbrindi2@illinois.edu}\\
   \Name{Cedric Langbort} \Email{langbort@illinois.edu}\\
  \Name{Melkior Ornik} \Email{mornik@illinois.edu}\\
  \addr Coordinated Science Laboratory, University of Illinois Urbana-Champaign, Urbana, USA}

\begin{document}

\def\ddefloop#1{\ifx\ddefloop#1\else\ddef{#1}\expandafter\ddefloop\fi}

    \def\ddef#1{\expandafter\def\csname c#1\endcsname{\ensuremath{\mathcal{#1}}}}
    \ddefloop ABCDEFGHIJKLMNOPQRSTUVWXYZ\ddefloop

    \def\ddef#1{\expandafter\def\csname s#1\endcsname{\ensuremath{\mathsf{#1}}}}
    \ddefloop ABCDEFGHIJKLMNOPQRSTUVWXYZ\ddefloop

    \def\ddef#1{\expandafter\def\csname b#1\endcsname{\ensuremath{\mathbb{#1}}}}
    \ddefloop ABCDEFGHIJKLMNOPQRSTUVWXYZ\ddefloop

\maketitle
\thispagestyle{plain}

\begin{abstract}%
Motivated by the challenge presented by non-Markovian objectives in reinforcement learning (RL), we present a novel framework to track and represent the progress of autonomous agents through complex, multi-stage tasks. Given a specification in finite linear temporal logic (LTL), the framework establishes a `tracking vector' which updates at each time step in a trajectory rollout. The values of the vector represent the status of the specification as the trajectory develops, assigning true, false, or `open' labels (where `open' is used for indeterminate cases). Applied to an LTL formula tree, the tracking vector can be used to encode detailed information about how a task is executed over a trajectory, providing a potential tool for new performance metrics, diverse exploration, and reward shaping. In this paper, we formally present the framework and algorithm, collectively named Live LTL Progress Tracking, give a simple working example, and demonstrate avenues for its integration into RL models. Future work will apply the framework to problems such as task-space exploration and diverse solution-finding in RL.
\end{abstract}

\begin{keywords}%
Explainable Reinforcement Learning, Exploration, Linear Temporal Logic
\end{keywords}

\section{Introduction}
\par In this work, we consider a common setting in reinforcement learning (RL) where goal specifications are known a priori, such as pre-determined tasks to complete or constraints to follow. We pay special attention to \textit{non-Markovian} tasks, which contain time or order dependencies. Though multi-stage, order-dependent tasks are quite common, a significant challenge arises when they are applied to the typical Markov decision process (MDP) model in RL, which can only capture Markovian dependencies. While it is possible to extend the system and explicitly encode past events in the state itself, this is prone to state-space explosion \citep{bonet2003labeled} and has prompted a wide variety of workarounds. The most prominent of these is the incorporation of replay buffers and memory \citep{liu2018effects}. A popular solution is long short-term memory (LSTM), which preserves information from recent time steps \citep{hochreiter1997long, 7508408}. Unfortunately, even state-of-the-art LSTMs suffer in both performance and computation time as the memory size increases, in part due to the inability of the model to reliably identify which past events are most relevant \citep{kandadi2025drawbacks}. In the special case where specifications are known a priori, it seems preferable to leverage this system knowledge to incorporate relevant events more directly into training. 

\par Variants of \textit{reward shaping} have incorporated memory in tailored ways to make non-Markovian tasks learnable, especially via recently-developed \textit{reward machines} and LTL \citep{icarte2018using}. In general, reward shaping carefully engineers functions to distribute rewards for improved feedback \citep{ibrahim2024comprehensive, ren2021learning, tang2024simplesumdelayedrewards}. In a reward machine (RM), the usual reward function is replaced by a finite state machine (FSM) whose internal states and transitions are structured according to the specification of interest, making it possible to assign rewards depending directly on a notion of progress \citep{icarte2022reward}. Several recent RM approaches consider specifications in LTL, a convenient syntax for time- and order-dependent expressions like tasks and constraints \citep{liao2020survey}, and shown that RMs on these objectives have asymptotic learning guarantees \citep{le2024reinforcement}. 

\par We present our framework, Live LTL Progress Tracking (LPT), and discuss its integration into existing RM methods; we also propose it as a starting point for less-explored capabilities in RL. In line with the suggestion by \cite{camacho2019ltl} that internal states of RMs be leveraged during learning, our framework decomposes LTL specifications into a novel multilayered structure, forming an enhanced, informative new tracking state. As shown in Figure \ref{fig:lpt-flowchart}, we first decompose a specification into a common tree structure. This tree is used by the algorithm to structure its tracking state. To perform live tracking, trajectory updates are fed into the algorithm, which produces a corresponding tracking state vector. This vector may serve as a reference to identify solutions which are meaningfully different from each other in terms of the task of interest, potentially providing a guiding diagnostic during exploration and a basis for diverse solution finding.

\begin{figure}[t]
\centering
\includegraphics[width=\columnwidth]{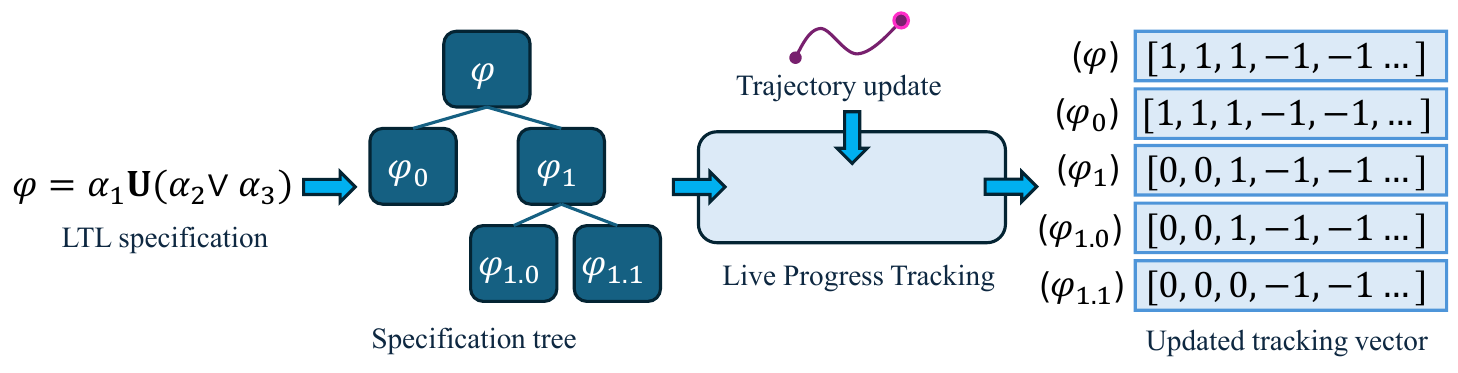}
\caption{A diagram of the Live Progress Tracking algorithm. The tracking vector output is updated at every trajectory time step, such that the vector values specify a \textit{true} (1), \textit{false} (0), or \textit{open} (-1) status for each tree node at each time step.}
\label{fig:lpt-flowchart}
\end{figure}

\par We intend this paper as a theoretical basis for future work and exclude exhaustive experimentation. We will present the algorithm structure and include a reference to a prototype Python implementation. We supplement the formalization with basic conceptual examples and discuss potential for integration and new capabilities in RL.

\subsection{Notation}
 \begin{table}[h!]
\centering
\begin{tabular}{cp{12.0 cm}}
$2^P$&The power set of $P$, i.e. all possible subsets of a set $P$.\\
\\
$f : X \rightarrow [Y\rightarrow Z]$&Function $f$ maps the set $X$ to a function space of all functions $Y\rightarrow Z$.\\
\\
$\phi_1\ |\ \phi_2$ & In a context-free grammar (CFG), the symbol $|$ is used to delineate \textit{alternatives}. In this notation, a rule $\varphi := \phi_1\ |\ \phi_2$ specifies that $\varphi$ may take the form $\varphi=\phi_1$ or $\varphi=\phi_2$. For further discussion, see \cite{1056813} and \cite{hopcroft2001introduction}.\\

\end{tabular}
\end{table}

\FloatBarrier

\section{Previous Work}

\par In this section, we situate our work more thoroughly within the literature. Firstly, a variety of methods exist to seek optimal policies for non-Markovian tasks in linear temporal logic (LTL). For stochastic systems, a common objective is to maximize the probability that a specification is satisfied by a strategy. In RL, this strategy is a \textit{policy}, a function prescribing the choice of actions taken by the agent \citep{camacho2017non}. LTL synthesis, the most-explored framing for LTL specifications on transition systems, involves a two-step process. First, the LTL specification is converted into an automaton, such as a limit-deterministic B\"uchi automaton (LDBA). The original system is likewise converted, and the two are combined into a single system over a product state space \citep{baier2008principles, wang2020continuous}. Next, an optimal policy is found by attempting to maximize the probability that good (``accepting'') states of the automaton are reached via dynamic programming \citep{fu2014probably}. 
\par Problematically for the first step, automaton generation is computationally intensive \citep{kretinsky2025semmlenhancing}. Moreover, the guarantees obtained in the maximization step are severely conditioned on sufficient sampling \citep{ding2011ltl, 6702421}. It is thus perhaps unsurprising that practical examples are typically limited to low-complexity tasks, such as simple reach-avoid problems \citep{cai2023safe, 8431181}. 

\par Partly in response to the challenges of classic LTL synthesis, some approaches in RL have instead shaped a reward function to correspond to progress through the specification. In many cases, these introduce a distance measure to calculate how far the current state is from an accepting state in terms of transitions; actions which take the agent `in the right direction' are rewarded \citep{9981759, kantaros2022accelerated, kwon2025adaptivereward}, \cite{shah2025ltl}. For complicated tasks, many approaches seek modularity: breaking TL specifications into simpler subtasks to be learned individually and then applied to multi-step or multi-objective problems \citep{yuan2019modular}. Reward shaping has also been implemented by approximating a value function on an LDBA \citep{bagatella2025directed}.

\par Sometimes in combination with the aforementioned strategies, LTL reward machines have begun to gain popularity. Similarly to LTL synthesis, these machines typically begin by constructing an automaton from an LTL specification which is then used to assign rewards. These RMs have been developed for time-discounted LTL \citep{alur2023policy}, modularized versions of LTL tasks (``sequential'' LTL) \citep{zheng2022lifelong}, and finite LTL \citep{camacho2018non}. However, RMs structured in this way suffer from the same complexity hurdle as LTL synthesis during automaton generation.

\par To track a more explicit notion of progress, some approaches incorporate \textit{LTL progression}, first proposed by \cite{BACCHUS2000123}. LTL progression is a method to update a specification dynamically as a trajectory rolls out, simplifying the LTL specification as its individual requirements are satisfied. In this way, progression tracks the requirements `to go' from a current timestep, i.e., which requirements remain to be met at that point. For example, \cite{pmlr-v139-vaezipoor21a} encode a specification as a Graph Neural Network (GNN) and assigns rewards based on LTL progression. Similarly to an application we will recommend in this paper, some methods use progression to identify subtasks and learn value functions for each of them \citep{toro2018teaching}. 

\par We avoid LTL progression because of the way it expresses progress, namely as a dynamic specification which compiles remaining requirements into one running `to-do.' As is the case in the automata of LTL synthesis, the single representation obscures many of the finer details of how a specification is met. When applied to the exploration problem, such as by \cite{hasanbeig2020deep}, we propose that our framework will help to identify a wider variety of behaviors. We will explore the distinction in our example scenario.

\section{Preliminaries}

\subsection{Reinforcement Learning and Reward Machines}
\par We primarily propose the use of our LPT framework for RL. RL is a paradigm in machine learning in which agents learn behaviors using rewards or costs as feedback. The agent and its environment are generally expressed together as a Markov Decision Process (MDP). In our case, we will introduce a \textit{labeling function} to the base MDP.
\begin{definition}[Labeled Markov Decision Process]
    A labeled MDP is a tuple
    \begin{equation}
        \mathcal{M}=\langle S, A, T, R, P, \mathcal{L}\rangle,
    \end{equation}
    where $S$ is a finite set of discrete states, $A$ is a finite set of actions, $T: S\times A \times S \rightarrow \mathbb{R}$ is a transition function expressing the probability of state transitions given a choice of action, and $R: S\times A \rightarrow \mathbb{R}$ is a reward function. $P$ is a finite set of labels, and $\mathcal{L} : S \rightarrow 2^P$ is a labeling function assigning a set $L\subseteq P$ to each state in $S$. \label{def:MDP}
\end{definition}
An agent prescribes a \textit{policy} $\pi: S \rightarrow A$ to map states to actions, aiming to optimize its \textit{expected cumulative reward} on the MDP. This may incorporate a discount factor $\gamma\in [0,1]$:
\begin{equation}
    V^\pi(s_t) = \mathbb{E}_\pi \big[\sum_{k=0}^\infty\gamma^k R(s_{t+k},a_{t+k}) \big]
\end{equation}
where $\mathbb{E}_\pi$ denotes the expectation given that $a_t=\pi(s_t)$ and transitions $(s_t,a_t,s_{t+1})$ occur with probability $T(s_t,a_t, s_{t+1})$. The optimal policy is $\pi^*(s) = \argmax_\pi  V^\pi(s)$.  

\par Recently, some RL approaches have replaced or supplemented the simple reward function with a \textit{reward machine}. We consider a simplified version adapted from \cite{icarte2018using}: 

\begin{definition}[Reward Machine]\label{def:rewardMachine}
Let $\mathcal{M}=\langle S, A, T, R, P, \mathcal{L}\rangle$. A reward machine for $\mathcal{M}$ is
\begin{equation}
    \mathcal{R}_{M} = \langle U, u_0, F, \delta_u, \delta_r\rangle
\end{equation}

where $U$ is a finite set of internal states, $u_0\in U$ is an initial state, F is a finite set of terminal states such that $U \cap F = \emptyset$, $\delta_u : U \times 2^P \rightarrow U\cup F$ is the internal transition function, and $\delta_r : U \rightarrow [S\times A\rightarrow\mathbb{R}]$ is the reward function.  

\end{definition}
Note that the function $\delta_r$ produces functions $S\times A\rightarrow \mathbb{R}$ as its output. The machine's structure makes it possible to represent rewards which are non-Markovian on $\mathcal{M}$ by allowing the reward function to vary depending on an internal state. 

\subsection{System Traces and Linear Temporal Logic}
 From its labeling function, any system run over the MDP of Definition \ref{def:MDP} produces a sequence of sets of labels over time. This sequence, called a system trace, serves as a description of the trajectory. We consider systems producing finite traces:

\begin{definition}[Finite Trace]
Let $s_t$ denote the state in a trajectory at time $t$. Denote its corresponding label set $\mathcal{L}(s_t)$ by $L_t$. A finite trace is the sequence $\rho = (L_{T_0},...,L_{T_f})$ produced by system trajectory $(s_{T_0},...,s_{T_f})$ over discrete time interval $\{T_0,...,T_f\}$.
\end{definition}

\begin{definition}[Finite Trace Suffix]
    The trace suffix $\rho^{t...}=(L_t,...,L_{T_f})$ is the part of the full trace $\rho$ which begins at $t$, where $T_0\leq t \leq T_f$. 
\end{definition}

\par Our method analyzes traces using LTL \citep{pnueli1977temporal}. LTL is a fixture in formal methods and rule inference, where it is used to express \textit{constraints,} such as safety requirements or operational limits, and \textit{tasks,} such as goals and processes \citep{camacho2019learning}. Formulas are expressed over a vocabulary of \textit{atomic propositions}, in our case the set $P$ of labels from $\mathcal{M}$. We specifically adopt finite LTL (LTL$_f$), which adapts the standard (infinite-trace) LTL satisfaction conditions to apply to finite sequences \citep{de2013linear, fionda2018ltl}. A general formula in LTL and LTL$_f$ can be expressed as a context-free grammar as follows:
\begin{equation}\label{eq:cfg}
    \varphi := true\ |\ \alpha \ |\ \varphi_1 \wedge \varphi_2 \ |\ \neg \varphi_1 \ |\ \mathbf{X}\varphi_1 \ |\ \varphi_1 \mathbf{U} \varphi_2 \quad \text{where $\alpha\in P$.}
\end{equation}

 \par Given finite suffix $\rho^{t...} = (L_{t},...,L_{T_f})$, the truth of $\varphi$ on $\rho^{t...}$ is determined by Definition \ref{FiniteTraceDef}.

\begin{definition}[Formula Evaluation on Finite Traces]\label{FiniteTraceDef}
    LTL formula $\varphi$ is true on $\rho^{t...} = (L_{t},...,L_{T_f})$, denoted $\rho^{t...}\models\varphi$, if $T_0\leq t \leq T_f$ \textbf{and}:
        \par$\bullet$ $\rho^{t...} \models \alpha$ where $\alpha\in P\quad$ iff $\quad\alpha\in L_{T_0}$
        \par$\bullet$ $\rho^{t...}\models \neg \varphi\quad$ iff $\quad\rho^{t...}\not\models\varphi$
        \par$\bullet$ $\rho^{t...} \models \varphi_1 \wedge \varphi_2\quad$ iff $\quad\rho^{t...}\models\varphi_1$ and $\rho^{t...}\models\varphi_2$
        \par and for the temporal operators \textup{Next} $\mathbf{X}$ and \textup{Until} $\mathbf{U}$, 
    \par$\bullet$ $\rho^{t...} \models \mathbf{X}\varphi\quad$ iff $\quad \rho^{t+1...}\models\varphi$ \textbf{and} $t<T_f$
    \par$\bullet$ $\rho^{t...}\models\varphi_1\mathbf{U}\varphi_2\quad$ iff $\quad \exists i\geq t$ s.t. $\rho^{i...}\models\varphi_2$ and $\rho^{k...} \models \varphi_1$ $\forall k$, $t\leq k < i$ \textbf{and} $i\leq T_f$.
\end{definition}
The condition $\rho\models\varphi$, read as ``$\varphi$ holds on $\rho$,'' means that the trace $\rho$ satisfies the conditions set by formula $\varphi$. For example, if $\varphi$ specifies a task to be completed, $\rho\models\varphi$ means that the agent behavior captured by $\rho$ successfully completed the task. The notation $\rho\not\models\varphi$ denotes the negation of this.
\par For convenience, additional LTL operators $\vee$ and $\mathbf{G,F,R,M,W}$ are typically defined as well, though they can be constructed directly from $\wedge,\neg, \mathbf{X,U}$ \citep{pnueli1977temporal}. Intuitive interpretations of each operator are given in Table \ref{tab:LTLoperators}. 
\par Now we define formula \textit{arguments} to decompose formulas into their constituent parts:

 \begin{table}[t]
\centering
\begin{tabular}{cp{12.0 cm}}
\toprule
 \textbf{Operator}&\textbf{Truth Condition (when $\rho\models \varphi$)}\\
 \midrule
\textbf{G}$\varphi$&\textbf{Global}: $\varphi$ is always true\\
\midrule
\textbf{F}$\varphi$&\textbf{Eventual}: $\varphi_1$ is eventually true\\
\midrule
\textbf{X}$\varphi$&\textbf{Next}: $\varphi_1$ must be true at the next time step\\
\midrule
$\varphi_1$\textbf{U}$\varphi_2$&\textbf{Until}: $\varphi_1$ remains true until $\varphi_2$ is true (the latter must occur at some $t$)\\
\midrule
$\varphi_1$\textbf{W}$\varphi_2$&\textbf{Weak until}: $\varphi_1$ must remain true (1) always, or (2) until $\varphi_2$ becomes true\\
\midrule
$\varphi_1$\textbf{R}$\varphi_2$&\textbf{Release}: $\varphi_2$ must remain true (1) always, or (2) until (and including) the time step when $\varphi_1$ becomes true\\
\midrule
$\varphi_1$\textbf{M}$\varphi_2$&\textbf{Strong release}: True only if Condition (2) for Release is met\\
\midrule
$\neg\varphi$&\textbf{Negation}: $\varphi$ must be false \\
\midrule
$\varphi_1\wedge\varphi_2$&\textbf{Conjunction}: both $\varphi_1$ and $\varphi_2$ must be true\\
\midrule
$\varphi_1\vee\varphi_2$&\textbf{Disjunction}: $\varphi_1$ or $\varphi_2$ or both must be true\\
\midrule
$\varphi_1\rightarrow\varphi_2$&\textbf{Implication}: if $\varphi_1$ is true, $\varphi_2$ must also be true\\
\bottomrule\\
\end{tabular}
\caption{\label{tab:LTLoperators} Intuitive descriptions of LTL operators.}
\end{table}

\begin{definition}[Arguments of an LTL formula]\label{def:formula_args}
Consider the LTL order of operations: (1) grouping symbols; (2) $\neg, \mathbf{X}$, and other unary operators; (3) $\mathbf{U}$ and other temporal binary operators; and (4) $\vee, \wedge, \rightarrow$. For LTL formula $\varphi$, all $\varphi_j$ bound by the weakest operator are the \textit{arguments} of $\varphi$.
\end{definition}

\par Recalling our purpose, we are interested in a framework to track an agent's progress in terms of specifications. This leads us to the proposed framework, which we will define in terms of LTL formulas and their arguments.

\section{LTL Live Progress Tracking}
\subsection{Overview}
LTL Live Progress Tracking (LPT) is a novel framework that uses LTL specifications to describe and track behaviors, such as an agent completing steps in a task. This is done by mapping agent trajectories to traces and providing the updated traces to the framework as the trajectory progresses. The framework ultimately encodes the trace as a `signature:' a vector which serves as a unique identifier for behavior patterns. 

\par In short, LTP signatures are generated from a ternary tracking vector which contains a value for each time step in the trace. The values correspond to a truth status for each argument of the specification (true, false, open), and they update continuously as the agent trace rolls out (hence the \textit{live} descriptor). Intuitively, an argument is assigned \textit{true} at $t$ only if its formula must logically hold on any suffix from $t$, given what has already occurred. It is likewise \textit{false} only if the formula must logically be false. The \textit{open} assignment means that the algorithm has not identified conditions for \textit{true} nor \textit{false}. (We intentionally avoid a stronger statement; we reserve the discussion of this for the Definitions below.)
\par The example below gives a brief demonstration of how such signatures can describe system traces. 

\begin{example}
    Consider an agent which should eventually collect two keys, Key A and Key B, represented by formula 
    \begin{equation*}
        \varphi =\mathbf{F}\texttt{keyA}\wedge\mathbf{F}\texttt{keyB}.
    \end{equation*} 
\end{example}
\par We begin by considering the formula tree for $\varphi$, which has five nodes: 
\begin{equation}\label{eq:treenodes}
    \mathbf{F}\texttt{keyA}\wedge\mathbf{F}\texttt{keyB}\quad \mathbf{F}\texttt{keyA}\quad \mathbf{F}\texttt{keyB}\quad\texttt{keyA}\quad\texttt{keyB}
\end{equation} The LPT framework establishes a tracking vector for each of these nodes, with each vector containing one entry per time step. The signature is simply a concatenation of these vectors with consecutive repeated values merged, e.g. $[0,0,1,-1,-1]$ becomes $[0,1,-1]$, since this preserves the order of events while removing excess information about precise timing. For a fixed formula, trajectories can then be compared behaviorally in terms of their signatures.

\begin{figure}[h]
\centering
\includegraphics[width=.9\columnwidth]{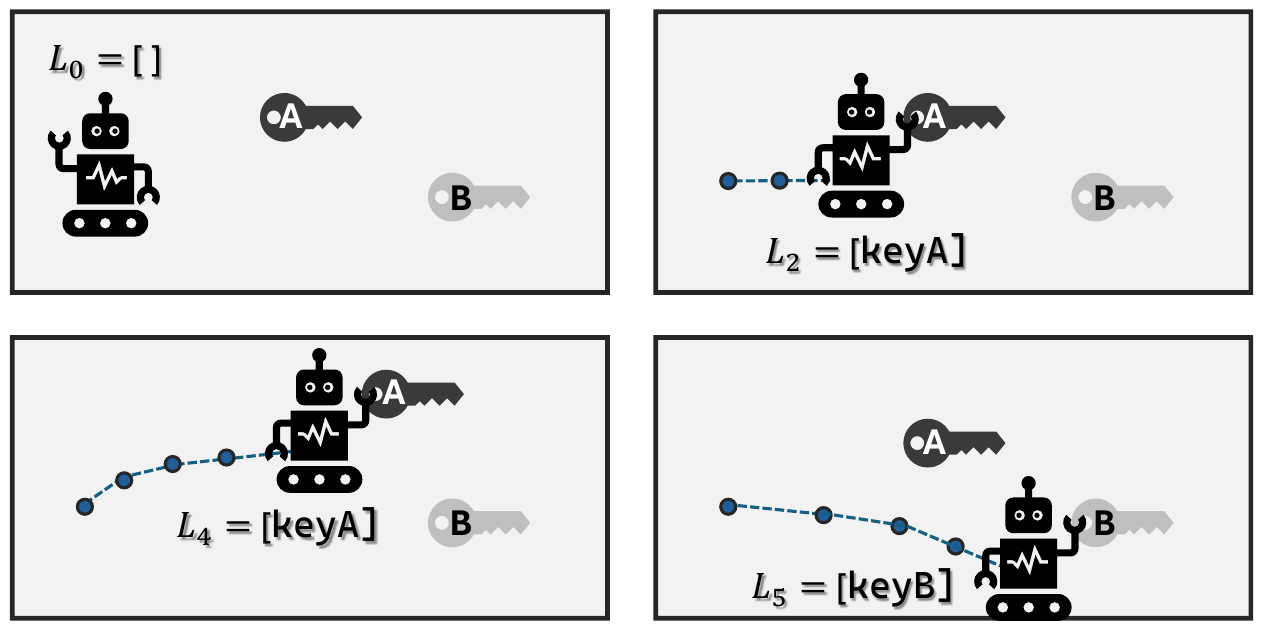}
\caption{Snapshots from a key collection example.}
\label{fig:signature_ex}
\end{figure}

\par Consider Figure \ref{fig:signature_ex}. On the upper left, an agent begins at $t=0$ with trace $\rho^{0...0}=(\{\})$. $(\texttt{keyA})$ and $(\texttt{keyB})$ are false. It is unknown whether the agent will obtain either key in the future, so $(\mathbf{F}\texttt{keyA})$, $(\mathbf{F}\texttt{keyB})$ and their conjuction are open. Listing the nodes in the same order as \eqref{eq:treenodes}, the signature is 
\begin{equation*}
[[-1],[-1],[-1],[0],[0]].
\end{equation*}

\par On the upper right, the agent is at $t=2$ with $\rho^{0...2}=(\{\},\{\},\{\texttt{keyA}\})$. Now that key A has been obtained, $(\mathbf{F}\texttt{keyA})$ is conclusively true for $\rho^{0...},\rho^{1...},$ and $\rho^{2...}$. Key B is not yet acquired, so $(\texttt{keyB})$ is false for $t=0,1,2$. However, it may be acquired in the future, so $(\mathbf{F}\texttt{keyB})$ remains open. Because of this, $(\mathbf{F}\texttt{keyA}\wedge \mathbf{F}\texttt{keyB})$ is open as well. In all, the signature is
 \begin{equation*}
     [[-1],[1],[-1],[0,1],[0]].
 \end{equation*}

\par The bottom left shows an \textbf{alternative} path, where key A is obtained at $t=4$. The new trace is $\rho^{0...4}=(\{\},\{\},\{\},\{\},\{\texttt{keyA}\})$. The timing is different, but the behavior is the same as before and has an identical signature. 

\par Finally, the bottom right shows an alternative where key B is obtained \textbf{instead of} key A. The trace is $\rho^{0...5}=(\{\},\{\},\{\},\{\},\{\},\{\texttt{keyB}\})$. Following the same reasoning as previously, the signature is
\begin{equation*}
    [[-1],[-1],[1],[0],[0,1]]
\end{equation*}
reflecting the change in behavior.

\par The signatures of LPT track progress for every node in a rule tree, giving higher-fidelity information than the automata in LTL synthesis (discussed in section \ref{automatoncompare}). We propose that this gives potential advantages for exploration and diverse solution-finding. We will now define the framework as in \cite{brindisephd} and give an algorithm which generates tracking data in adherence with the framework. We then demonstrate an application of LPT and present a reward machine structure for use in RL.

\subsection{Definitions}

LPT tracks progress at a selected `current' time step $t'$ during a trajectory rollout. The trajectory up to $t'$ is described by finite trace
\begin{equation}
    \rho^{t_0...t'} := (L_{t_0},...,L_{t'}).
\end{equation}
For a rollout which terminates at a future time $T_f > t'$, we call the future sequence of unknown values a \textit{continuation:}
\begin{definition}[Trace Continuation]\label{def:continuation}
    Consider $\rho^{t...t'} = (L_t,...,L_{t'})$, where $L_t,...,L_{t'}$ are known and remain fixed. Then a \textit{continuation} of $\rho^{t...t'}$ refers to any trace created by appending an arbitrary label set sequence to $\rho^{t_0...t'}$, i.e.
    \begin{equation}
        \rho^{t...t^+} = (L_t,...,L_{t'}, L^+_{t'+1},...,L^+_{t^+})
    \end{equation}
    where each $L^+$ is any set of labels $L\subseteq P$.
\end{definition}
Note that this definition does not require $\rho^{t...t^+}$ to represent a feasible trajectory on $\mathcal{M}$.
\par The goal of LPT is to assign tracking values \textit{true} (h), \textit{false} (v), and \textit{open} (o) to each formula argument appropriately at each time step given the known information in $\rho^{t...t'}$. A value assignment is considered \textit{valid} if it satisfies the following properties:

\begin{definition}[Tracking value properties]\label{def:trackingprops}
Consider an LTL formula $\varphi$ on $\rho^{t...t'}$ with continuations $\rho^{t...t^+}$. Then a value assignment of (h),(v), or (o) is \textbf{valid} if:
\par $\bullet$ \textup{true (h)}: $\rho^{t...t^+}\models\varphi$ for all $t^+\geq t'$.
\par $\bullet$ \textup{false (v)}: $\rho^{t...t^+}\not\models\varphi$ for all $t^+\geq t'$.
\par $\bullet$ \textup{open (o)}: any of these conditions may or may not be met.
\end{definition}

\par Definition \ref{def:trackingprops} deliberately leaves flexibility in the assignment of (o), using it as a catch-all for cases where $\rho^{t_0...t^+}\stackrel{?}{\models}\varphi$ is perhaps determinate but not resolved by the algorithm. For example, consider the formula $\varphi=\varphi_0 \wedge\neg\varphi_0$. If the truth of $\rho\models\varphi_0$ is unknown (o), the LPT algorithm as defined below will assign $\varphi$ as (o), despite the fact that $\rho\not\models\varphi$ by logical deduction. Since LPT is focused on tracking rather than formal verification, this is an acceptable limitation.

\par Value assignments are organized for LTL arguments as a \textit{tracking value vector}, which depends on current time step $t'$:
\clearpage
\begin{definition}[Tracking value vector]\label{def:lpt_trackingvalvec}
    \sloppy Consider $\varphi$ and finite traces in discrete time $\rho^{T_0...}$ with final time $T_f$. Then a tracking value vector for any argument of $\varphi$ can be expressed as
        ${\vec{\xi} = (\xi[{T_0}], ...,\xi[{t'}],..., \xi[{T_f}])^T}$,
     where $t'$ is a current update time with $t'\in\{T_0,...,T_f\}$. For all entries $\xi[t]$, 
    \begin{equation}
        \xi[t] = 
        \begin{cases}
            1 & \varphi \text{ has value (h) on }\rho^{t...t'}\\
            0 & \varphi \text{ has value (v) on }\rho^{t...t'}\\
            -1 & \text{otherwise}
        \end{cases}
    \end{equation}
\end{definition}

\begin{definition}[Validity of tracking value vector]\label{def:validtvv}
    A tracking value vector $\vec{\xi}$ is called valid for LTL formula $\varphi$ on $\rho^{T_0...}$ if and only if all entries $\xi[t]$ correspond to a valid assignment of (h), (v), or (o) by Definition \ref{def:trackingprops}.
\end{definition}

We now present an algorithm to calculate tracking value vectors which provably satisfy Definition \ref{def:validtvv}. Such an algorithm will be able to provide module-by-module tracking information as we have described for system trajectories.

\subsection{Algorithm}
\subsubsection{Definitions and Guarantees}
For any specification, the LPT algorithm begins by constructing a formula tree. As in a standard LTL tree, the top-level node contains the original formula, which has child node(s) containing its arguments, and so on, such that the leaves of the tree are atomic formulas (labels). Each node is additionally assigned a \textit{type} to identify its top-level LTL operator (e.g. $\mathbf{F},\wedge,\mathbf{U},...$). The algorithm then processes a current trajectory for the entire tree from the bottom, starting from the tree leaves ($\varphi:=\alpha$) and progressing upward. For each node, it assigns a tracking value for the argument in question based on its formula type and the tracking values of its children. Once the algorithm finishes at the top level, tracking value vectors have been populated for every node in the tree.

\begin{tcolorbox}[colback=gray!10!white,colframe=gray!75!black,title=Live LTL Progress Tracking Algorithm]
\begin{enumerate}
\item Construct tree for LTL rule $\varphi$. For each node, store the top-level LTL operator as the node \textit{type} $\theta$.
\item Find $\vec{\xi}$ of each node for all $\rho^{t_0...t'}$ such that $t_0\leq t'$ and ${t_0,t'\in\{T_0,...,T_f\}}$:
\par \textbf{Initialize} $\vec{\xi} = (-1,...,-1)$ for all nodes.
\par \textbf{Set} $t'=T_0$. \textbf{While} $t'\leq T_f$:
\begin{enumerate}
\item For each distinct label $\alpha$ appearing in any leaf, pass $\rho^{T_0...t'}$, ${\varphi=\alpha}$ into the $AP$ module. Store outputs $\vec{\xi}$ for all leaves containing this $\alpha$.
\item For leaf $\varphi_{...x.y}$, check parent $\varphi_{...x}$. If $\vec{\xi}$ is already stored for all children of $\varphi_{...x}$, proceed to (c); otherwise, move to next leaf and repeat (b).
\item Identify $\theta$ of $\varphi_{...x}$. Pass $\varphi_{...x}$, $\rho^{T_0...t'}$ into $\theta$ module; store output $\vec{\xi}$ for $\varphi_{...x}$.
\item Check parent of $\varphi_{...x}$. If it exists and all its children are instantiated, repeat step c) for this parent. Otherwise, move to next leaf and begin again from step (b).
\item \textbf{If} $t'=T_f$, perform \textbf{terminal evaluation} for all tree nodes $\phi$ of $\varphi$: for all $t\in\{T_0,...,T_f\}$, if $\xi[t]=-1$, set
    \begin{align}\label{eq:terminal_eval}
        \xi[t]=
        \begin{cases}
            1&\rho^{t...T_f}\models\phi\\
            0&\rho^{t...T_f}\not\models\phi
        \end{cases}
    \end{align}
\item Increment $t'$.
\end{enumerate}
\end{enumerate}
\end{tcolorbox}

The modules for LPT are given in Tables \ref{tab:logmodules_lpt} and \ref{tab:tempmodules_lpt}. Intuitively, each module determines an assignment (o), (h), or (v) based on the LTL definitions for its node type. For example, a node containing $\varphi=\varphi_x \vee \varphi_y$ will determine whether $\rho\models\varphi$ by checking the output of its child nodes, which contain the current tracking values for $\rho\models\varphi_x$ and $\rho\models\varphi_y$. The module is designed to assign for $\varphi$ based on the definition of $\vee$ (`or'). Intuitively, this will be true (h) if either argument is conclusively true (h), false (v) if both arguments are conclusively false (v), and open (o) otherwise.

\par To formally guarantee that every module makes logically-sound assignments based on the LTL definitions, we will prove Theorem \ref{correctnessLPT}.

\begin{theorem}[LPT Algorithm Output]\label{correctnessLPT}
Consider an LTL formula $\varphi$ and trace $\rho^{T_0...t'}$ which updates incrementally ($t'=T_0,...,T_f$). For inputs at each update $t'$, the LPT algorithm always assigns valid tracking value vectors $\vec{\xi}$ by Definition \ref{def:validtvv} for all nodes of $\varphi$.
\end{theorem}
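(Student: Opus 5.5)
We prove Theorem \ref{correctnessLPT} by structural induction on the formula tree, following the bottom-up order in which the algorithm processes nodes, combined with an outer induction on the update time $t'$. The invariant is: after the algorithm finishes processing a node $\phi$ at update $t'$, every entry $\xi[t]$ with $T_0\leq t\leq t'$ is valid for $\phi$ on $\rho^{t...t'}$ in the sense of Definition \ref{def:trackingprops}. Entries with $t>t'$ remain at the initial value $-1$ (open), which is trivially valid. Two facts are then needed. First, validity is \emph{monotone} in $t'$: any continuation of $\rho^{t...t'+1}$ is also a continuation of $\rho^{t...t'}$. Hence a value of (h) or (v) that is valid at $t'$ stays valid at all later updates, and modules may safely retain earlier conclusive values. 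Second, terminal evaluation at $t'=T_f$ is valid. At that point the trace is complete, and (\ref{eq:terminal_eval}) assigns exactly the truth value of $\rho^{t...T_f}\models\phi$ under Definition \ref{FiniteTraceDef}. Since no labels are appended beyond $T_f$, this is the required (h) or (v). In this step I read the "for all $t^+\geq t'$" clause as ranging over the rollout's actual horizon.

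\textbf{Base case (leaves).} For $\phi=\alpha$, the AP module assigns (h) at $t$ if $\alpha\in L_t$ and (v) otherwise, for $t\leq t'$. This is valid because satisfaction of an atom depends only on the first label of the suffix, which is fixed in every continuation. I will read the clause "$\alpha\in L_{T_0}$" in Definition \ref{FiniteTraceDef} as $\alpha\in L_t$.

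\textbf{Inductive step (modules).} I will go through the modules in Tables \ref{tab:logmodules_lpt} and \ref{tab:tempmodules_lpt} one type at a time. In each case I assume the children's vectors are valid at update $t'$ and show the output is valid.

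The logical types are direct. For $\neg$, the module swaps (h) and (v). For $\wedge$, it outputs (h) only if both children are (h), and (v) if either child is (v). The derived operators $\vee$ and $\rightarrow$ are handled similarly. In each case validity follows by quantifying over continuations: if every continuation satisfies both $\varphi_1$ and $\varphi_2$, then every continuation satisfies $\varphi_1\wedge\varphi_2$.

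For $\mathbf{X}$, the module outputs $\xi_\phi[t]=\xi_{\varphi_1}[t+1]$ when $t<t'$, and (o) at $t=t'$. This is valid because the clause $t<T_f$ holds in every continuation once a later label is known.

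For $\mathbf{U}$, and for the derived operators $\mathbf{F},\mathbf{G},\mathbf{W},\mathbf{R},\mathbf{M}$, I will characterize each (h) and (v) output by a witness.
\begin{itemize}
\item (h) for $\varphi_1\mathbf{U}\varphi_2$ requires some $i\leq t'$ with $\xi_{\varphi_2}[i]=1$ and $\xi_{\varphi_1}[k]=1$ for all $t\leq k<i$. By the inductive hypothesis, these hold in every continuation, so the semantic witness exists there.
\item (v) requires some $j\leq t'$ with $\xi_{\varphi_1}[j]=0$ and $\xi_{\varphi_2}[k]=0$ for all $t\leq k\leq j$. Any candidate $i$ then either has $i\leq j$, so $\varphi_2$ fails at $i$, or has $i>j$, so $\varphi_1$ fails at $j<i$.
\end{itemize}
For a derived operator whose module is not literally a composition of the base modules, I will still check it directly against its unfolding into $\wedge,\neg,\mathbf{X},\mathbf{U}$.

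\textbf{Main obstacle.} I expect the hardest step to be the temporal modules that assign (v) or (h) from \emph{open} children. Examples are $\mathbf{G}$ becoming (v), or $\mathbf{F}$ staying open even though it is decided in the finite semantics. There the argument must quantify over \emph{all} continuations, including arbitrarily long ones. The subtle point is that an open child at time $k$ may resolve either way in different continuations. So every (h) or (v) conclusion must rest only on conclusively valued child entries, and I will check this case by case against each module's rule. A second subtlety is the boundary interaction between $\mathbf{X}$ and the continuation length $t^+=t'$. When $t^+=t'$, a suffix starting at $t'$ has no next step, which would falsify $\mathbf{X}$. Any module that assigns (h) involving a next-step requirement at the current frontier must therefore be checked to avoid doing so. If such a case arises, the fix is to restrict (h) to $t<t'$.
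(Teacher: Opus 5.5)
Your proposal is correct and follows essentially the same route as the paper's proof: an induction over the updates $t'$ with children processed before parents, a module-by-module check that every (h)/(v) written is forced by conclusive child entries (your $\mathbf{U}$ witnesses are what the module's $t_0$ run bookkeeping realizes), the persistence result of Lemma~\ref{lemma:xiproperty}, and Lemma~\ref{lemma:termupdate} for the terminal update, using the same fixed-horizon reading at $T_f$. The obstacles you anticipate do not arise for the modules as the paper's proof reads them: $\mathbf{F}$ writes (h) only from a child (h), $\mathbf{G}$ writes (v) only from a child (v), and $\mathbf{X}$ writes only at indices below $t'$; note, though, that you (like the paper) tacitly drop the ``or $t'=T_f$'' disjunct of the $\mathbf{X}$ module, which taken literally would write (v) at $T_f$ from still-open children before terminal evaluation.
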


The full proof of this theorem is in Appendix \ref{appendixA}. In brief, the proof follows an inductive logic:
\begin{enumerate}
    \item Show that $\vec{\xi}$ are initially valid tracking value vectors.
    \item When performing an update at $t'$, assume that all $\vec{\xi}_{t'-1}$ are valid.
    \item Prove that, given a valid $\vec{\xi}_{t'-1}$, the algorithm generates a $\vec{\xi}_{t'}$ which is a valid tracking value vector.
\end{enumerate}
This is accomplished in part by establishing properties of the evolution of correct $\vec{\xi}$ and examining the logic for each operator in a module-by-module way.
\FloatBarrier

\begin{table}[h]
\centering
\begin{tabular}{lp{11 cm}}
\toprule
\textbf{Not} $(neg)$& LTL: $\varphi=\neg\varphi_1$\\
\midrule
Initialize: & Load $\vec{\xi}$ for $\varphi$ and $\vec{\xi}_1$ for child node $\varphi_1$.\\
For $t=T_0,...,t'$:&If $\xi_1[t]=0$:\quad Set $\xi[t]= 1$.\\
    &Else if $\xi_1[t]=1$:\quad Set $\xi[{t}] = 0$.\\
\midrule
\textbf{Or} $(or)$&LTL: $\varphi=\varphi_1\vee\varphi_2$\\
\midrule
Initialize: & Load $\vec{\xi}$ for $\varphi$ and $\vec{\xi}_1,\vec{\xi}_2$ for child nodes $\varphi_1,\varphi_2$.\\
For $t=T_0,...,t'$:&If $\xi_1[t]=1$ or $\xi_2[t]=1$:\quad Set $\xi[t] = 1$. \\
    &Else if $\xi_1[t]=0$ and $\xi_2[t]=0$:\quad Set $\xi[t] = 0$.\\
\midrule
\textbf{And}  $(and)$&LTL: $\varphi=\varphi_1\wedge\varphi_2$\\
\midrule
Initialize: & Load $\vec{\xi}$ for $\varphi$ and $\vec{\xi}_1,\vec{\xi}_2$ for child nodes $\varphi_1,\varphi_2$.\\
For $t=T_0,...,t'$:&If $\xi_1[t]=1$ and $\xi_2[t]=1$:\quad Set $\xi[t] = 1$. \\
    &Else if $\xi_1[t]=0$ or $\xi_2[t]=0$:\quad Set $\xi[t] = 0$.\\
\midrule
\textbf{Implication}  $(\rightarrow)$&LTL: $\varphi=\varphi_1\rightarrow\varphi_2$\\
\midrule
Initialize: & Load $\vec{\xi}$ for $\varphi$ and $\vec{\xi}_1,\vec{\xi}_2$ for child nodes $\varphi_1,\varphi_2$.\\
For $t=T_0,...,t'$:&If $\xi_1[t]=0$ or $\xi_2[t]=1$:\quad Set $\xi[t] = 1$. \\
    &Else if $\xi_1[t]=1$ and $\xi_2[t]=0$:\quad Set $\xi[t] = 0$.\\
\bottomrule\\
\end{tabular}
\caption{\label{tab:logmodules_lpt} Logical operator module definitions.}
\end{table}

\begin{table}[t]
\centering
\begin{tabular}{lp{11.0 cm}}
\toprule
\textbf{Atomic}  $(AP)$ & LTL: $\varphi=\alpha$\\
 \midrule
Initialize: & Load $\vec{\xi}$ for $\varphi$.\\
If $\alpha\in L_{t'}$:&Set $\xi[t']= 1$.\\
Else: &Set $\xi[{t'}] = 0$.\\
 \midrule
\textbf{Next}  $(X)$&LTL: $\varphi=\mathbf{X}\varphi_1$\\
\midrule
Initialize: & Load $\vec{\xi}$ for $\varphi$ and $\vec{\xi}_1$ for child node $\varphi_1$.\\
For $t=T_0,...,t'$:&If $\xi_1[t] = 1$, $t>T_0$:\quad Set $\xi[t-1] = 1$.\\
&Else if $\xi_1[t] = 0$, $t>T_0$\textbf{ or } $t'=T_f$: \quad Set $\xi[t-1] = 0$.\\
\midrule
\textbf{Eventual} $(F)$&LTL: $\varphi=\mathbf{F}\varphi_1$\\
\midrule
Initialize: & Load $\vec{\xi}$ for $\varphi$ and $\vec{\xi}_1$ for child node $\varphi_1$.\\
For $t=T_0,...,t'$:& If $\xi_1[t]=1$:\quad For all $t_0$ s.t. $T_0\leq t_0 \leq t$: set $\xi[t_0]= 1$.\\
\midrule
\textbf{Global} $(G)$&LTL: $\varphi=\mathbf{G}\varphi_1$\\
\midrule
Initialize: & Load $\vec{\xi}$ for $\varphi$ and $\vec{\xi}_1$ for child node $\varphi_1$. Set $t=t'$.\\
While $t\geq T_0$:& If $\xi_1[t]=0$:\quad For all $t_0$ s.t. $T_0\leq t_0 \leq t$: set $\xi[t_0]= 0$. Set $t=T_0-1$.\\
& Else: set $t=t-1$.\\
 \midrule
 
 \textbf{Until} $(U)$&LTL: $\varphi=\varphi_1\mathbf{U}\varphi_2$\\
 \midrule
 & See Algorithm \ref{alg:untilLPT}.\\

 \midrule
 \textbf{W. until} $(W)$&LTL: $\varphi=\varphi_1\mathbf{W}\varphi_2$\\
 \midrule
 &Same as $\mathbf{U}$.
\\
\midrule

 \textbf{S. release} $(M)$&LTL: $\varphi=\varphi_1\mathbf{M}\varphi_2$\\
 \midrule
 & See Algorithm \ref{alg:sreleaseLPT}.\\
\midrule
 \textbf{Release} $(R)$&LTL: $\varphi=\varphi_1\mathbf{R}\varphi_2$\\
 \midrule
 &Same as $\mathbf{M}$.\\
\bottomrule\\
\end{tabular}
\caption{\label{tab:tempmodules_lpt} Temporal operator module definitions.}
\end{table}


\begin{algorithm}[t]
\SetAlgoLined
\SetKwFunction{getUpdates}{getUpdates}

\nl\textbf{Initialize:} Load $\vec{\xi}$ for $\varphi$ and $\vec{\xi}_1,\vec{\xi}_2$ for child nodes $\varphi_1,\varphi_2$.\\
\nl Set  $t_0,t= T_0$.

\nl \While{$t\leq t'$}{
    \nl \uIf{$\xi_1[t]=0$ and $\xi_2[t]=0$}{
        \nl Set $\xi[t] = 0$\\
        \nl Set $t = t+1$\\
        \nl Set $t_0 = t$
    }
    \nl \uElseIf{$\xi_2[t] = 1$}{
        \nl Set $\xi[t_0],...,\xi[t] = 1$\\
        \nl Set $t = t+1$\\
        \nl Set $t_0 = t$
    }
    \nl \uElseIf{$\xi_1[t] = 0$ or $\xi_1[t] = -1$}{
        \nl Set $t = t+1$\\
        \nl Set $t_0 = t$
    }
    \nl\uElse{
        \nl Set $t = t+1$
    }
}
\caption{Until/Weak Until Modules for LPT.}\label{alg:untilLPT}
\end{algorithm}


\begin{algorithm}[t]
\SetAlgoLined
\SetKwFunction{getUpdates}{getUpdates}

\nl\textbf{Initialize:} Load $\vec{\xi}$ for $\varphi$ and $\vec{\xi}_1,\vec{\xi}_2$ for child nodes $\varphi_1,\varphi_2$.\\
\nl Set  $t_0,t= T_0$.

\nl \While{$t\leq t'$}{
    \nl \uIf{$\xi_2[t]=0$}{
        \nl Set $\xi[t] = 0$\\
        \nl Set $t = t+1$\\
        \nl Set $t_0 = t$
    }
    \nl \uElseIf{$\xi_1[t] = 1$ and $\xi_2[t] = 1$}{
        \nl Set $\xi[t_0],...,\xi[t] = 1$\\
        \nl Set $t = t+1$\\
        \nl Set $t_0 = t$
    }
    \nl \uElseIf{$\xi_2[t] = -1$}{
        \nl Set $t = t+1$\\
        \nl Set $t_0 = t$
    }
    \nl\uElse{
        \nl Set $t = t+1$
    }
}
\caption{Release/Strong Release Module for LPT.}\label{alg:sreleaseLPT}
\end{algorithm}

\FloatBarrier

\subsubsection{Complexity}
The computational complexity of LPT is dependent on the length of the trajectory, the depth of the specification, and the specification operator types. The necessary number of computations is also affected by the sequence of events in any individual trajectory. The bound of Theorem \ref{thm:lptcomplexity} is the most general bound, but computation time may be much lower in practice.

\begin{theorem}[Live LTL Progress Tracking Complexity]\label{thm:lptcomplexity}
For a formula tree with height L, base node at level $l=0$, and maximum 2 children per node evaluated on trace $\rho$, complexity has upper bound
    \begin{equation}\label{complexity_lpt}
    \mathcal{O}(2^{L}|\rho|^2).
\end{equation}
\end{theorem}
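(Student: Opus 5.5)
I would prove the bound by counting work node by node and update by update, and then summing over the tree. Write $|\rho| = T_f - T_0 + 1$ for the trace length. The LPT algorithm performs one outer iteration per update time $t' \in \{T_0,\dots,T_f\}$, so there are $|\rho|$ iterations. In each iteration, steps (a)--(d) visit every node of the formula tree exactly once. A node is processed in step (c) only after all of its children are stored, and each node has a unique parent, so no node is re-evaluated within an iteration. The total cost is therefore at most $|\rho|$ times the sum over tree nodes of the cost of one module call, plus the cost of the single terminal evaluation in step (e).

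The next step is to bound the number of nodes. Each node has at most 2 children and the tree has height $L$ with the root at level $l=0$. Level $l$ therefore contains at most $2^l$ nodes, and the whole tree has at most $\sum_{l=0}^{L} 2^l = 2^{L+1}-1 = \mathcal{O}(2^L)$ nodes.

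I would then bound the per-call cost of each module in Tables \ref{tab:logmodules_lpt} and \ref{tab:tempmodules_lpt} by $\mathcal{O}(|\rho|)$, going through the module types one at a time.
\begin{itemize}
\item The $AP$ module touches one entry, so it costs $\mathcal{O}(1)$.
\item The $neg$, $or$, $and$, $\rightarrow$ and $X$ modules loop over $t=T_0,\dots,t'$ with constant work per step, so they cost $\mathcal{O}(|\rho|)$.
\item The $G$ module scans backward from $t'$, writes zeros on at most one prefix, and then exits. It costs $\mathcal{O}(|\rho|)$.
\item The $F$ module, taken literally, rewrites the prefix $T_0,\dots,t$ for each $t$ with $\xi_1[t]=1$, which could cost $\mathcal{O}(|\rho|^2)$ per call.
\item In Algorithms \ref{alg:untilLPT} and \ref{alg:sreleaseLPT}, each loop iteration increments $t$. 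The block write $\xi[t_0],\dots,\xi[t]=1$ is followed by resetting $t_0=t+1$, so the written windows are disjoint and their total length is at most $|\rho|$. The Until and Release modules are therefore $\mathcal{O}(|\rho|)$ by an amortized argument.
\end{itemize}

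The main obstacle is the $F$ module, and I would handle it in one of two ways. The first is to note that the loop can be implemented as a single backward pass in $\mathcal{O}(|\rho|)$: set $\xi[t_0]=1$ for every $t_0 \le t_{\max}$, where $t_{\max}$ is the largest $t$ with $\xi_1[t]=1$. This is equivalent to the module as written. The second is to charge the prefix writes against the amortized disjointness argument used for $\mathbf{U}$.

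With every module at $\mathcal{O}(|\rho|)$, one update costs $\mathcal{O}(2^L|\rho|)$, and the $|\rho|$ updates give $\mathcal{O}(2^L|\rho|^2)$. It remains to check that the terminal evaluation in step (e) does not exceed this. Evaluating $\rho^{t\dots T_f}\models\phi$ for all $t$ and all nodes can be done bottom-up by dynamic programming over suffixes, using the standard recursion $\varphi_1\mathbf{U}\varphi_2 \equiv \varphi_2 \vee (\varphi_1 \wedge \mathbf{X}(\varphi_1\mathbf{U}\varphi_2))$. This costs $\mathcal{O}(|\rho|)$ per node, so $\mathcal{O}(2^L|\rho|)$ in total, which is within the bound and proves \eqref{complexity_lpt}.
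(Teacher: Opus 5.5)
Your proposal is correct and follows the paper's route: $\mathcal{O}(|\rho|)$ work per module call, $\mathcal{O}(2^L)$ nodes, and $|\rho|$ updates. It is more careful than the paper, whose count of ``one evaluation per argument per $t_0$'' silently ignores the nested prefix writes of the $\mathbf{F}$ module, the block writes $\xi[t_0],\dots,\xi[t]=1$ in the Until/Release modules, and the cost of the terminal evaluation.

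Rely on your first fix for $\mathbf{F}$ (a single write of $[T_0,t_{\max}]$), because the second does not work as stated. The prefixes $[T_0,t]$ written by $\mathbf{F}$ are nested rather than disjoint, since no left endpoint is reset after each write as in the Until module, so the amortization you used for $\mathbf{U}$ does not transfer.
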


\begin{proof}
    Consider a trace with from $T_0$ to $T_f$, i.e. $|\rho| = T_f-T_0$, for which an LPT update is performed at every time step. For each node $\varphi_{...x}$ in a tree, the corresponding module is called once per update; each module produces a tracking value for every $t_0$ in $\{T_0,...,t'\}$, resulting in $\leq t'-T_0$ evaluations per module call. At each $t_0$, the module performs \textbf{one evaluation per argument} $\varphi_{...x.y}$. Thus, any node with a binary operator (two arguments) performs $n\leq 2(t'-T_0)$ evaluations per update, yielding conservative bound $n<2|\rho|$.
    \par A tree with height L, base node at level $l=0$, and maximum 2 children per node will have $n_L \leq 2^L$ nodes at level L. In particular, since the final level $L$ must consist of atomic nodes, this may be further bounded as $n_L \leq 2^{L-1}$. Furthermore, atomic nodes require only \textbf{one} evaluation per call. This means that the number of evaluations for a tree of depth $L$ is of order $\mathcal{O}(2^{L-1}2|\rho|)$, or $\mathcal{O}(2^{L}|\rho|)$.
    \par Now, if an update is performed at every time step, there are $|\rho|$ total updates. The upper bound on complexity thus becomes $2^{L}|\rho||\rho|$, or
    \begin{equation*}
    \mathcal{O}(2^{L}|\rho|^2)
    \end{equation*}
    for the full tree. 
\end{proof}

\subsection{Automata-Theoretic Comparison}\label{automatoncompare}
In this section, we provide a comparison of our algorithm structure with the automata of LTL synthesis using an illustrative example. 
\begin{example}[LPT vs. LTL Synthesis: Automaton Perspective]\label{ex:automata}
Consider the LTL rule 
\begin{equation}
    \varphi = \mathbf{G}(\alpha\rightarrow\mathbf{X}\beta)
\end{equation}
where $\alpha,\beta$ are atomic propositions.  
\end{example}

 In this case, LTL synthesis represents $\varphi$ as the automaton shown in Figure \ref{fig:full_rule_automaton}. State transitions occur in the automaton at each time step in the trace $\rho$, where the domain of the transition function is the \textit{alphabet} $\Sigma=\{\epsilon, \{\alpha\}, \{\beta\}, \{\alpha,\beta\}\}$. Here, $\epsilon$ represents all sets of atomic propositions that do not contain $\alpha$ or $\beta$. 
 \par Importantly, when interpreting or tracking system progress, the automaton in Figure \ref{fig:full_rule_automaton} provides limited information. State $q_1$ is an accepting state; this means that, if the automaton transitions to $q_1$ at a given time step, the full rule $\phi$ is true on the current trace. State $q_3$ is a non-accepting sink state, so a transition here means that the rule must be false (and remain false for the rest of the trace). However, tracking only these states may provide extremely sparse information. Indeed, for our example rule, $q_1$ is not a sink, and so the system transitions away from it any time $\{\alpha\}, \{\alpha,\beta\}$ appear in the trace. Therefore, if $\varphi$ is true for the entire trace, we will not have this information until the final time step. Note that the other states $q_0, q_2$ are not as readily interpreted; they are abstract automaton states, constructs of the LTL synthesis. 
 
\begin{figure}[h]
\centering
\includegraphics[width=.4\columnwidth]{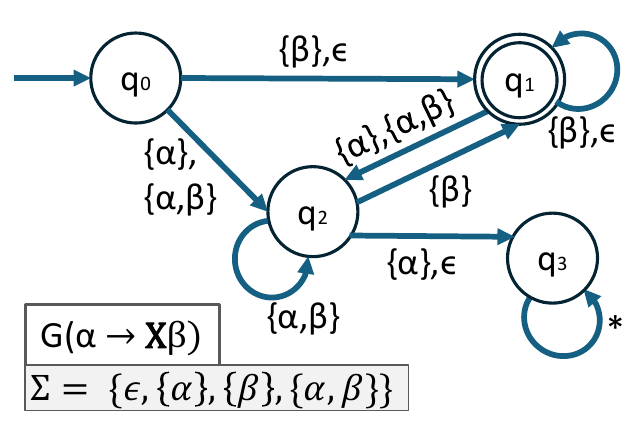}
\caption{}
\label{fig:full_rule_automaton}
\end{figure}

\par Now we consider the automaton interpretation of LPT. Recall that LPT assigns status values for \textit{every time step} in a trace for each rule argument. This is done modularly, where the module for argument $\varphi_x$ updates a vector $\vec{\xi}_x = (\xi[T_0],...,\xi[T_f])$. The atomic modules ingest trace time steps, giving them the same alphabet as before. For non-atomic arguments, the input to each module is the $\vec{\xi}$ of its child modules, meaning that their alphabet is
\begin{equation}
    \Sigma = \{(x_{T_0},...,x_{T_f})\ | \ x_i\in \{-1, 0, 1\} \}
\end{equation}
for unary operators, and
\begin{equation}
    \Sigma = \{((x_{T_0},...,x_{T_f}), (y_{T_0},...,y_{T_f}))\ | \ x_i, y_j\in \{-1, 0, 1\} \}
\end{equation}
for binary ones. For this rule, then, the LPT algorithm is equivalent to the series of automata shown in Figure \ref{fig:lpt_automata}. Every automaton has the \textbf{same structure}, with five transitions and the true (h) and false (v) values captured by sink states 0, 1. This matches the intuitive purpose of the status assignments, which should only assign $1,0$ on a trace segment once the value is conclusive for the entire future trace. The difference between module types is determined solely by which values from $\Sigma$ lead to each of the five possible transitions. 
\par For simplicity, we show automata to calculate the LPT status value for a single time step $t$, i.e. the tracking vector entries $\xi[t]$ for each node $(\mathbf{G}\phi), (\phi_1\rightarrow \phi_2), (\mathbf{X}\phi), (\alpha), (\beta).$ A key takeaway from this diagram is that values are available to track progress through every single argument, providing much more detail. Specifically, LPT provides one tracking state (with possible values true, false, open) per argument, yielding a 5-valued vector for each time step in this example (and up to $3^5$ distinct assignments). In contrast, the automaton of LPT synthesis assigns a single abstract state for the entire rule at each time step, where multiple possible values roughly correspond to `open' states; the vector of information allows for 4 distinct assignments.

\par In all, while the LTL synthesis approach results in a more concise representation, LPT provides higher-resolution tracking data. 

\begin{figure}[h]
\centering
\includegraphics[width=0.7\columnwidth]{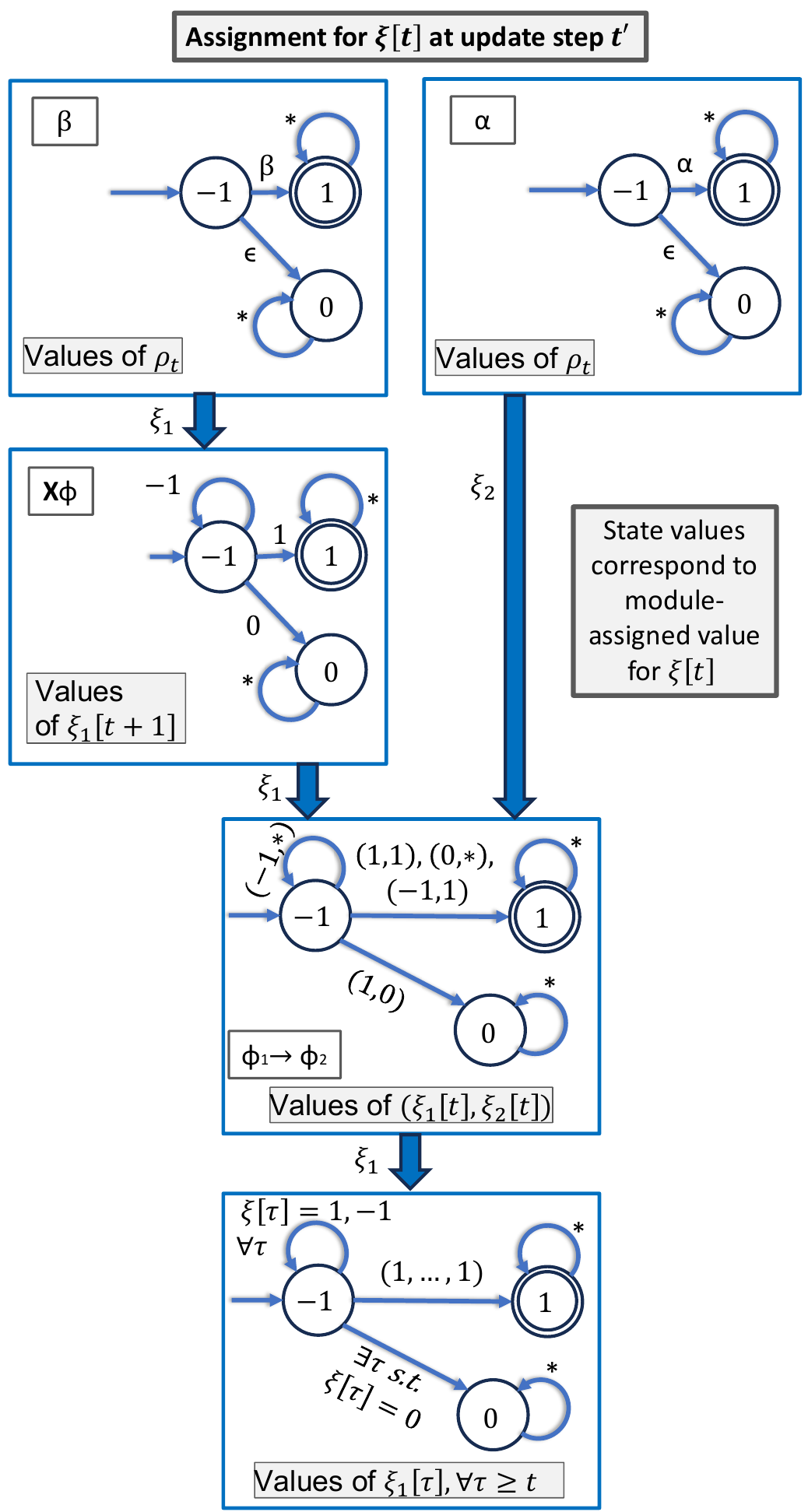}
\caption{LPT algorithm expressed as a series of automata for example formula $\mathbf{G}(\alpha\rightarrow\mathbf{X}\beta)$.}
\label{fig:lpt_automata}
\end{figure}
\FloatBarrier

\section{Conceptual Application: Reward Machines}
\begin{figure}[h]
\centering
\includegraphics[width=.7\columnwidth]{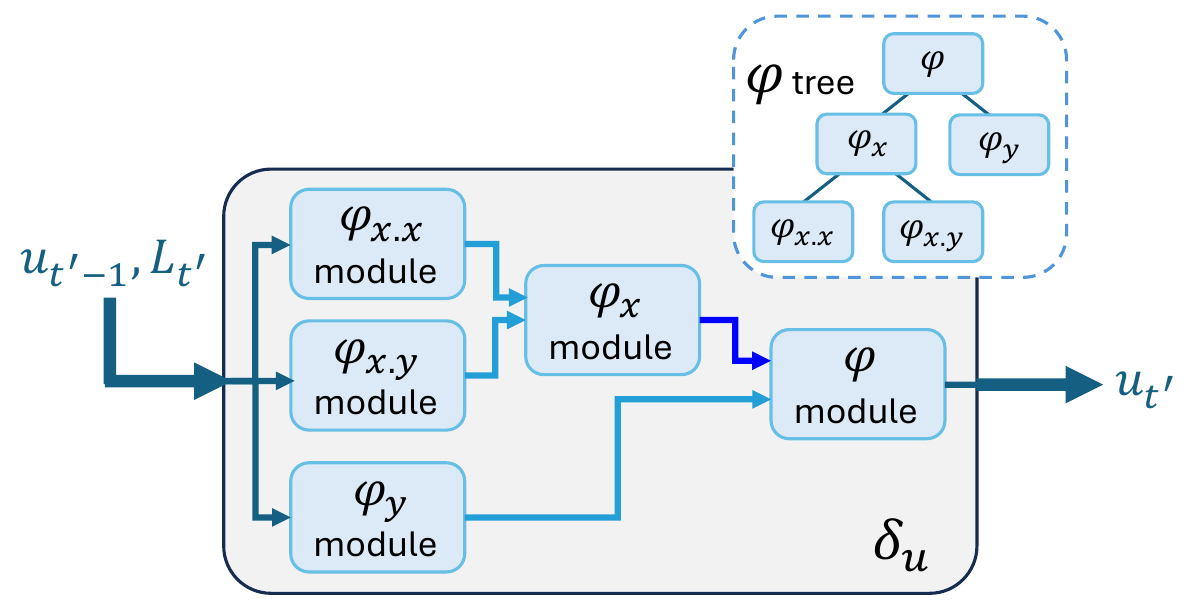}
\caption{Transition function $\delta_u$ of an LPT reward machine. The internal state is dependent on the formula $\varphi$. In this example, $u_{t'} = (\vec{\xi}, \vec{\xi}_x, \vec{\xi}_{x.x},\vec{\xi}_{x.y},\vec{\xi}_{y}, t')$.}
\label{fig:lpt_reward_machine}
\end{figure}

\par In the introduction, we suggest that LPT be applied to guide exploration and incentivize diverse behaviors. A starting point is the application of LPT in a reward machine. 

\par Taking tracking vectors as states and considering the dynamics by which they are updated, LPT fits easily into an RM framework. Recalling the Preliminaries, RMs serve as a replacement for simple reward functions in an MDP, assigning rewards based on a separate state space and transition function within the machine itself. 
\par Consider Figure \ref{fig:lpt_reward_machine}. As shown in the figure, we first define an RM-internal state $(u,L)$. Here, $u$ contains a current time as well as the tracking vectors for all nodes of a given specification. $L$ is a set of labels as before. Just as in LPT, we take the previous tracking vector $u_{t'-1}$ as an input alongside the most recent trace step $L_{t'}$. The updated tracking vector is then calculated modularly exactly as before, shown in the diagram as the output $u_{t'}$. Since the original LPT algorithm does not require any additional or intermediate inputs beyond $(u_{t'-1},L_{t'})$ as described, it may be treated as a self-contained transition function. In the diagram, we show it as $\delta_u$.

\par Given this internal state, a new reward function $\delta_r$ may be established. In the general definition of RMs, $\delta_r$ may map $U$ to a function space, i.e. a function $R:S\times A\rightarrow \mathbb{R}$ is output for each $u\in U$. 

\par Suppose all rewards are non-negative. Then, to incentivize a specific tracking state $u_{goal}$, for example, take 
\begin{equation}
    \delta_r = \begin{cases}
R(s,a) & u_{t'}=u_{goal}\\
0  &  \text{otherwise.}
\end{cases}
\end{equation}
Consider the common case that an agent has learned a specific behavior, but different behaviors are desired. Collecting the tracking vectors observed during the first behavior in the set $U_{old}$:
\begin{equation}
    \delta_r = \begin{cases}
R(s,a) & u_{t'}\not\in U_{old}\\
0  &  \text{otherwise.}
\end{cases}
\end{equation}

When applying an RM to an MDP, it is necessary to give the machine access to the RM state as well as its own, such that policies may have the form $\pi:S\times U\rightarrow A$. 

\section{Conclusion and Future Work}
In this paper, we establish Live Linear Temporal Logic Progress Tracking (LPT), an algorithm for detailed tracking of autonomous agent trajectories. LPT produces vector representations of an agent's progress through tasks that are specified in finite Linear Temporal Logic (LTL). These vectors encode behavioral information such as the order in which task-related events occur. By compressing vectors into a `signature', it is possible to compare trajectories in terms of their behavioral features, independently of specific timing or time scales.

\par We suggest that LPT has promising applications in reinforcement learning and give a framework for an LPT-based reward machine. By providing progress-based non-Markovian rewards, the machine could be used to incentivize incremental progress through tasks or exploration of diverse behaviors. Future work will apply the conceptual framework of LPT to RL, both for exploration and agent performance diagnostics.

\acks{This research was funded in part by an Army Educational Outreach Program fellowship.}

\bibliography{References}
\appendix
\section{Theorem \ref{correctnessLPT}}\label{appendixA}

 \textbf{Theorem \ref{correctnessLPT}}: \textit{Consider an LTL formula $\varphi$ and trace $\rho^{T_0...t'}$ which updates incrementally ($t'=T_0,...,T_f$). For inputs at each update $t'$, the LPT algorithm always assigns valid tracking value vectors $\vec{\xi}$ by Definition \ref{def:validtvv} for all nodes of $\varphi$.}

\par We prove the theorem in several steps. First, we will briefly confirm that the proof does indeed instantiate tracking value vectors for all nodes at each update. By the algorithm description, all modules take the following inputs:
\begin{itemize}
    \item \textbf{Initial and current times} $T_0, t'$, respectively
    \item \textbf{Trace} $\rho^{T_0...t'} = (L_{T_0},...,L_{t'})$
    \item \textbf{Tracking value vectors} $\vec{\xi}_x$ for each argument $\varphi_x$ of $\varphi$
\end{itemize}
Importantly, the LPT algorithm works up from child to parent nodes, processing all children of a parent before progressing to the parent. This means that tracking value vectors for the arguments of a formula will always be instantiated by the time the formula module is called, continuing to the full rule at the trunk of the tree.

\subsection{Foundation: Proof by Induction}
\par Critically for the proof, \textbf{all $\vec{\xi}$ are initialized with $\xi[t]=-1$ for all $t\in\{T_0,...,T_f\}$}. By Definition \ref{def:lpt_trackingvalvec}, this corresponds to an initial value of (o), and the vector is trivially valid by Definition \ref{def:validtvv}.
\par Then, to prove that each module produces a valid tracking value vector for its own node, it is possible to use induction: if all $\vec{\xi}$ at $t$ are initially valid by Definition \ref{def:validtvv}, a module which provably updates $\vec{\xi}$ according to Definition \ref{def:validtvv} will also produce a valid $\vec{\xi}$ at $t+1$. More specifically, the proof takes the following format:
\begin{enumerate}
    \item Show that $\vec{\xi}$ are initially valid tracking value vectors in line with Definition \ref{def:validtvv}.
    \item When performing an update at $t'$, assume that all $\vec{\xi}_{t'-1}$ are valid tracking value vectors.
    \item Prove that, given a valid $\vec{\xi}_{t'-1}$, the algorithm generates a $\vec{\xi}_{t'}$ which is valid.
\end{enumerate}
(1) is handled above. To complete (3), all updates to $\vec{\xi}$ must be examined. By the LPT algorithm, all updates take place either in the defined operator modules (Tables \ref{tab:logmodules_lpt} and \ref{tab:tempmodules_lpt}) or at the terminal update (equation \eqref{eq:terminal_eval1} in the LPT Algorithm). The proofs by module below show that each modular update from $t'$ to $t'+1$ produces valid tracking value vectors. The terminal update is handled in Lemma \ref{lemma:termupdate}. Once all updates are considered, it must follow that all $\vec{\xi}$ are valid tracking value vectors at all $t'$, proving Theorem \ref{correctnessLPT}.

\subsection{Select Definitions}
\par We revisit several relevant definitions here, including relevant notes on the proof which follows.
\par Concerning notation: in the evaluations of module compliance to Definition \ref{def:trackingprops},
\begin{itemize}
    \item by Definition \ref{def:lpt_trackingvalvec}, an equivalence is made between numerical values $1,0,-1$ as assigned by a module and their corresponding tracking values (h),(v),(o). In other words, if conditions are proven to satisfy the property for a given tracking value, it is treated as sufficient to show that the module assigns the matching numerical value for the appropriate entry of $\vec{\xi}$. This 
    \item Regarding notation, when an assignment is proven to satisfy Definition \ref{def:trackingprops} conditions for a tracking value, it is denoted by the corresponding value in bold, e.g. $\textbf{(h)}$ signifies that an assignment has been proven to meet the criterion for a time step to have value $1$.
\end{itemize}

\begin{tcolorbox}[colback=gray!10!white,colframe=white!80!black,title=\color{black}{Definition \ref{def:trackingprops}: Tracking value properties}]
Consider an LTL formula $\varphi$ on $\rho^{t...t'}$ with continuations $\rho^{t...t^+}$. Then a value assignment of (h),(v), or (o) is \textbf{valid} if:
\par $\bullet$ \textup{true (h)}: $\rho^{t...t^+}\models\varphi$ for all $t^+\geq t'$.
\par $\bullet$ \textup{false (v)}: $\rho^{t...t^+}\not\models\varphi$ for all $t^+\geq t'$.
\par $\bullet$ \textup{open (o)}: any of these conditions may or may not be met.
\end{tcolorbox}

\begin{tcolorbox}[colback=gray!10!white,colframe=white!80!black,title=\color{black}{Definition \ref{def:lpt_trackingvalvec}: Tracking value vector}]
 Consider $\varphi$ and finite traces in discrete time $\rho^{T_0...}$ with final time $T_f$. Then a tracking value vector for any argument of $\varphi$ can be expressed as
    \begin{equation}
        \vec{\xi} = (\xi[{T_0}], ...,\xi[{t'}],..., \xi[{T_f}])^T
    \end{equation}
     where $t'$ is a current update time with $t'\in\{T_0,...,T_f\}$. For all entries $\xi[t]$, 
    \begin{equation}
        \xi[t] = 
        \begin{cases}
            1 & \varphi \text{ has value (h) on }\rho^{t...t'}\\
            0 & \varphi \text{ has value (v) on }\rho^{t...t'}\\
            -1 & \text{otherwise}
        \end{cases}
    \end{equation}
\end{tcolorbox}

\par Now, for each module, Definition \ref{FiniteTraceDef} is extended as necessary to identify the conditions for $\rho^{t'...}\models\varphi$. 

\begin{tcolorbox}[colback=gray!10!white,colframe=white!80!black,title=\color{black}{Definition \ref{FiniteTraceDef} and Extensions}]
    LTL formula $\varphi$ is true on finite trace $\rho^{t_0...} = (L_{t_0},...,L_{T_f})$, denoted $\rho^{t_0...}\models\varphi$, \textbf{if} $T_0\leq t_0 \leq T_f$ \textbf{and}:
        \par$\bullet$ $\rho^{t_0...} \models \alpha$ where $\alpha\in P\quad$ iff $\quad\alpha\in L_0$
        \par$\bullet$ $\rho^{t_0...}\models \neg \varphi\quad$ iff $\quad\rho^{t_0...}\not\models\varphi$
        \par$\bullet$ $\rho^{t_0...} \models \varphi_1 \wedge \varphi_2\quad$ iff $\quad\rho^{t_0...}\models\varphi_1$ and $\rho^{t_0...}\models\varphi_2$
        \par and for the temporal operators \textup{Next} $\mathbf{X}$ and \textup{Until} $\mathbf{U}$, 
    \par$\bullet$ $\rho^{t_0...} \models \mathbf{X}\varphi\quad$ iff $\quad \rho^{t_0+1...}\models\varphi$ \textbf{and} $t_0<T_f$
    \par$\bullet$ $\rho^{t_0...}\models\varphi_1\mathbf{U}\varphi_2\quad$ iff $\quad \exists i\geq t_0$ s.t. $\rho^{i...}\models\varphi_2$ and $\rho^{k...} \models \varphi_1$ for all $k$ s.t. $t_0\leq k < i$ \textbf{and} $i\leq T_f$

Extending definitions for the additional operators yields

 \par $\bullet$ $\rho^{t_0...} \models \varphi_1\vee\varphi_2 \quad$ iff $\quad \rho^{t_0...}\models\varphi_1$ \textbf{or} $\rho^{t_0...}\models\varphi_2$

  \par $\bullet$ $\rho^{t_0...} \models \varphi_1\rightarrow\varphi_2 \quad$ iff $\quad\rho^{t_0...}\not\models\varphi_1$ \textbf{or} $\rho^{t_0...}\models\varphi_2$

  \par $\bullet$ $\rho^{t_0...} \models \mathbf{F}\varphi \quad$ iff $\quad \exists i\geq t_0$ s.t. $\rho^{i...}\models\varphi$ for $i\leq T_f$

 \par  $\bullet$ $\rho^{t_0...} \models \mathbf{G}\varphi \quad$ iff $\quad \rho^{i...}\models\varphi$ for all $i$ where $t_0\leq i\leq T_f$

 \par  $\bullet$ $\rho^{t_0...} \models \varphi_1\mathbf{W}\varphi_2 \quad$ iff $\quad (\exists i\geq t_0$ s.t. $\rho^{i...}\models\varphi_2$ and $\rho^{k...} \models \varphi_1$ for all $t_0\leq k < i$ \textbf{and} $i\leq T_f)$ \textbf{or} $(\rho^{i...}\models\varphi_1$ for all $i$ where $t_0\leq i\leq T_f)$
 
  \par $\bullet$ $\rho^{t_0...}\models\varphi_1 \mathbf{R} \varphi_2\quad$ iff $\quad$ for all $i$ s.t. $t_0\leq i\leq T_f$, $(\rho^{i...}\models\varphi_2)\vee (\exists k<i$ s.t. $\rho^{k...}\models\varphi_1)$

  \par $\bullet$ $\rho^{t_0...}\models\varphi_1 \mathbf{M} \varphi_2\quad$ iff $\quad\exists i,t_0\leq i\leq T_f$ s.t. $(\rho^{i...}\models \varphi_1) \wedge (\rho^{i...}\models\varphi_2) \wedge (\rho^{k...} \models \varphi_2$ for all $k$ s.t. $t_0\leq k < i)$
\end{tcolorbox}

\subsection{Lemmas and Properties of $\vec{\xi}$}
All valid $\vec{\xi}$ can be shown to have specific additional properties as a consequence of Definition \ref{def:trackingprops}. An important result is given in Lemma \ref{lemma:xiproperty}. Finally, the terminal update step in the LPT algorithm is handled by Lemma \ref{lemma:termupdate}.

\begin{lemma}[Dynamics of $\vec{\xi}$]\label{lemma:xiproperty}
    Consider $\vec{\xi}$ which is a valid tracking value vector by Definition \ref{def:validtvv} at update $t'$. Denote all entries at $t'$ by $\xi[t]_{t'}$. Then, for entries $\xi[t]_{t''}$ at a later update,
    \begin{align}
        (\xi[t]_{t'}&=1 \text{ valid})  \Rightarrow (\xi[t]_{t''}=1 \text{ valid})\\
        (\xi[t]_{t'}&=0 \text{ valid})  \Rightarrow (\xi[t]_{t''}=0 \text{ valid})
    \end{align}
    for all $t''>t'$.
    
\end{lemma}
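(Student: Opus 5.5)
The plan is to argue directly from Definition \ref{def:trackingprops}, using the fact that the set of continuations shrinks as the update time increases. Fix an entry index $t$ and updates $t' < t''$. Validity of $\xi[t]_{t'}=1$ means that $\rho^{t...t^+}\models\varphi$ for every continuation $\rho^{t...t^+}$ of $\rho^{t...t'}$ in the sense of Definition \ref{def:continuation}, with $t^+\geq t'$. At the later update $t''$, validity of $\xi[t]_{t''}=1$ requires that $\rho^{t...t^+}\models\varphi$ for every continuation of $\rho^{t...t''}$ with $t^+\geq t''$. So it suffices to show that every continuation of $\rho^{t...t''}$ is also a continuation of $\rho^{t...t'}$.

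The key step is this inclusion. A continuation of $\rho^{t...t''}$ has the form $(L_t,\dots,L_{t'},L_{t'+1},\dots,L_{t''},L^+_{t''+1},\dots,L^+_{t^+})$. Here $L_{t'+1},\dots,L_{t''}$ are the actual observed label sets, and each is some subset of $P$. Since Definition \ref{def:continuation} allows the appended sets after $t'$ to be arbitrary subsets $L^+\subseteq P$, this sequence is exactly a continuation of $\rho^{t...t'}$. Its appended block is $(L_{t'+1},\dots,L_{t''},L^+_{t''+1},\dots,L^+_{t^+})$, and its length satisfies $t^+\geq t''>t'$. The universally quantified condition over the larger set therefore restricts to the smaller set, which gives validity of $\xi[t]_{t''}=1$. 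The argument for $0$ is identical, with $\not\models$ in place of $\models$.

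One bookkeeping point needs care. Definition \ref{def:lpt_trackingvalvec} indexes validity by the pair (entry $t$, update $t'$). When $t>t'$, the trace $\rho^{t...t'}$ is empty and its continuations are all traces starting at $t$; the same inclusion argument still goes through. I also need to check the case $t'<t\le t''$. There, validity at $t'$ quantifies over arbitrary label sets at times $t,\dots$, and this set contains the continuations of $\rho^{t...t''}$.

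I expect the main obstacle to be interpretive rather than technical. Definition \ref{def:trackingprops} writes continuations with the index $t_0$ in some places and $t$ in others, so I would first fix the reading that a continuation of $\rho^{t...t'}$ keeps the known prefix $L_t,\dots,L_{t'}$ and appends arbitrary sets. I would also make explicit that the lemma asserts only that the value \emph{remains valid}. It does not claim that the algorithm keeps writing the same value; that is a separate matter handled by the module proofs.
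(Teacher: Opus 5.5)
Your proof is correct and takes essentially the same approach as the paper: validity at update $t'$ is a universally quantified condition over continuations with $t^+\geq t'$, and validity at $t''$ is the same condition over a subset of them, so the implication follows by restricting the quantifier. You also state explicitly that every continuation of $\rho^{t...t''}$ is a continuation of $\rho^{t...t'}$, because the newly observed $L_{t'+1},\dots,L_{t''}$ are themselves admissible appended label sets; the paper uses this inclusion only implicitly and restricts just the range of $t^+$, so your version is the more complete one.
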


\begin{proof}
Consider a valid tracking value vector $\vec{\xi}$ for some $\varphi$ and update time $t'$. First suppose $\xi[t]=1$ for some $t$, corresponding to (h). Then, by the conditions of Definition \ref{def:trackingprops},
\begin{equation}\label{eq:lemthing1}
\rho^{t...t^+}\models\varphi\quad\forall t^+\geq t'
\end{equation}
where $\rho^{t...t^+}$ is a continuation of $\rho^{t...t'}$ as in Definition \ref{def:continuation}. 
\par Now select any $t''$ such that $t'<t''\leq t^+$. Clearly, if \eqref{eq:lemthing1} is true, it must also be true that
\begin{equation}
\rho^{t...t^+}\models\varphi\quad\forall t^+\geq t''.
\end{equation}
This is the required property of a valid assignment of (h) at $t$ by Definition \ref{def:trackingprops}, making $\xi[t]_{t''}=1$ valid by Definition \ref{def:validtvv}. Thus, 
$$(\xi[t]_{t'}=1 \text{ valid})  \Rightarrow (\xi[t]_{t''}=1 \text{ valid})$$
for any $t''>t'$. 
\par Now suppose $\xi[t]=0$ for $\varphi, t'$, corresponding to assignment (v). Then, by Definition \ref{def:trackingprops},
\begin{equation}\label{eq:lemthing2}
\rho^{t...t^+}\not\models\varphi\quad\forall t^+\geq t'
\end{equation}
where $\rho^{t...t^+}$ is a continuation of $\rho^{t...t'}$. Again, select any later update $t''$ such that $t'<t''\leq t^+$. Then, given \eqref{eq:lemthing2}, it must also be true that
\begin{equation}
\rho^{t...t^+}\not\models\varphi\quad\forall t^+\geq t''.
\end{equation}
This is the property of a valid assignment of (v) at $t$. Thus $\xi[t]_{t''}=0$ is a valid tracking value vector, and so
$$(\xi[t]_{t'}=0 \text{ valid})  \Rightarrow (\xi[t]_{t''}=0 \text{ valid})$$
for any $t''>t'$, completing the proof.

\end{proof}

\par Lemma $\ref{lemma:xiproperty}$ yields a useful consequence for updates of $\vec{\xi}$: 

\begin{tcolorbox}[colback=gray!10!white,colframe=white!80!black,title=\color{black}{Property of valid $\vec{\xi}$}]
Given any entry $\xi[t]$ which is valid by Definition \ref{def:validtvv} at update $t'$, the entry \textbf{remains valid} at any later update $t''>t'$ if no evaluation is performed.
\end{tcolorbox}

\begin{proof}
    \par Trivially by Definitions \ref{def:validtvv} and \ref{def:trackingprops}, any assignment $\xi[t]=-1$ is always valid (if unspecific). Thus, only entries with $\xi[t]=0,1$ can be potentially be invalid and may \textbf{require} evaluation.
    \par The algorithm initializes all entries with $\xi[t]=-1$. As the algorithm \textbf{never} evaluates entries with $t> t'$ given strictly increasing update time $t'$, 
    $$\xi[t]=-1\quad \forall t>t'.$$ 
    Thus, entries for $\xi[t]$, $t>t'$ are uniformly valid at $t'$ and do not require evaluation. Now, for all entries $\xi[t]$ with $t\leq t'$, 
    \begin{itemize}
        \item if $\xi[t] = 0$ at the time of a previous update $t^-$, i.e. $\xi[t]_{t^-} = 0$ with $t^-<t'$, then $\xi[t]_{t'}$ must remain 0 by Lemma \ref{lemma:xiproperty}. Thus, no new evaluation is required for such $\xi[t]$ at any update $t'$.
        \item if $\xi[t] = 1$, the same principle applies.
        \item if $\xi[t] = -1$, the entry is valid trivially.
    \end{itemize}
    Therefore, any entry $\xi[t]$ which is a valid tracking value at update $t'$ \textbf{remains valid} at any later update $t''>t'$ if no evaluation is performed.
\end{proof}

\begin{lemma}[Terminal Update $T_f$]\label{lemma:termupdate}
    Consider $\vec{\xi}$ which is valid by Definition \ref{def:validtvv} at final update $T_f$. Denote entries of any $\vec{\xi}_{T_f}$ by $\xi[t]_{T_f}$. Then, if the following terminal update is performed for each tree node $\varphi$: 
    \begin{align}
        \xi[t]=
        \begin{cases}
            1&\rho^{t...T_f}\models\varphi\\
            0&\rho^{t...T_f}\not\models\varphi,
        \end{cases}
    \end{align}
    the resulting $\vec{\xi}_{T_f}$ is a valid tracking value vector.
\end{lemma}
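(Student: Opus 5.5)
The plan is to check Definition \ref{def:trackingprops} directly for the one situation the terminal update creates. At $t'=T_f$ there is only one relevant value of $t^+$ in that definition, namely $t^+=T_f$. The continuation $\rho^{t...T_f}$ then coincides with the observed trace suffix, since nothing is appended beyond $T_f$. So the ``for all $t^+\geq t'$'' quantifier in the (h) and (v) conditions reduces to the single statement $\rho^{t...T_f}\models\varphi$ or $\rho^{t...T_f}\not\models\varphi$.

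I would split the entries $\xi[t]_{T_f}$ into two groups.

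\textbf{Entries already equal to $0$ or $1$ before the terminal step.} These are not touched by \eqref{eq:terminal_eval}, which only acts when $\xi[t]=-1$. By hypothesis $\vec{\xi}$ is valid at update $T_f$, so these entries stay valid. The Property of valid $\vec{\xi}$ following Lemma \ref{lemma:xiproperty} confirms this.

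\textbf{Entries equal to $-1$.} These are overwritten. If $\rho^{t...T_f}\models\varphi$, the new value is $1$. The (h) condition with $t'=T_f$ requires exactly $\rho^{t...t^+}\models\varphi$ for all $t^+\geq T_f$, and the only admissible $t^+$ is $T_f$, so (h) holds. The case $\rho^{t...T_f}\not\models\varphi$ gives (v) in the same way. Since Definition \ref{FiniteTraceDef} is bivalent on a fixed finite trace, exactly one of the two cases occurs, and every $-1$ entry with $t\leq T_f$ receives a well-defined valid value.

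Putting the two groups together, every entry is valid in the sense of Definition \ref{def:trackingprops}, so $\vec{\xi}_{T_f}$ is valid by Definition \ref{def:validtvv}.

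The main obstacle is justifying that the admissible continuations at $t'=T_f$ are only the trace itself. Definition \ref{def:continuation} formally allows $t^+>t'$, so I need the convention that $T_f$ is the terminal time and continuations cannot extend past it. I would argue this explicitly: $T_f$ is the final time of the rollout, so all evaluation is on $\rho^{t...T_f}$. I would also note that the pre-existing $0/1$ entries agree with the terminal truth value, because their validity covers $t^+=T_f$ in particular. This means the terminal step introduces no conflict, and the result is consistent with the exact satisfaction value at every time step.
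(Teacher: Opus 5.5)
Your proposal is correct and follows essentially the same argument as the paper: at $t'=T_f$ the only admissible $t^+$ is $T_f$ itself, so the (h) and (v) conditions reduce to $\rho^{t...T_f}\models\varphi$ and $\rho^{t...T_f}\not\models\varphi$ respectively. Your extra split into untouched $0/1$ entries and overwritten $-1$ entries, and your explicit flagging of the convention that continuations cannot extend past $T_f$ (which the paper simply asserts), are harmless elaborations of the same proof.
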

\begin{proof}
\par Beginning with the assignment $\xi[t]_{T_f} = 1$: by Definition \ref{def:trackingprops}, $\varphi$ has the property associated with (h) at  update $t'=T_f$ if, for all $t^+\geq T_f$, 
$$\rho^{t...t^+}\models \varphi.$$
\par As $T_f$ is the terminal time, the only such $t^+$ is $T_f$ itself. Therefore, if $\rho^{t...T_f}\models\varphi$, then $\xi[t]_{T_f}=1$ meets the criterion for valid $\vec{\xi}$.
\par Next, for the assignment $\xi[t]_{T_f} = 0$: by Definiton \ref{def:trackingprops}, $\varphi$ has the property associated with (v) at  update $t'=T_f$ if, for all $t^+\geq T_f$, 
$$\rho^{t...t^+}\not\models \varphi.$$
By the same logic as (h), this is equivalent to the condition $\rho^{t...T_f}\not\models\varphi$ at update $T_f$. Therefore, $\xi[t]_{T_f}=0$ meets the criterion for valid $\vec{\xi}$ in this case.
    
\end{proof}

 \clearpage
\section{Logical modules}

This section proves that, given update $t'$ and valid tracking value vectors for each child node of some $\varphi$, the logical modules assign a valid tracking value vector for $\varphi$ at $t'$.

\subsection{Atomic proposition}

\begin{table}[h]
\centering
\begin{tabular}{lp{13.0 cm}}
\toprule
\textbf{Atomic}  $(AP)$ & LTL: $\varphi=\alpha$\\
 \midrule
Initialize: & Load $\vec{\xi}$ for $\varphi$.\\
If $\alpha\in L_{t'}$:&Set $\xi[t']= 1$.\\
Else: &Set $\xi[{t'}] = 0$.\\

\bottomrule\\
\end{tabular}
\caption{\label{tab:apmoduleLPT} Atomic proposition module definition.}
\end{table}
\FloatBarrier

\begin{tcolorbox}[colback=gray!10!white,colframe=white!80!black,title=\color{black}{Definition \ref{FiniteTraceDef}: Formula Evaluation on Finite Traces (AP)}]
 LTL formula $\varphi$ is true on finite trace $\rho^{t_0...} = (L_{t_0},...,L_{T_f})$, denoted $\rho^{t_0...}\models\varphi$, if $T_0\leq t_0 \leq T_f$ \textbf{and}:
        \par$\rho^{t_0...} \models \alpha$ where $\alpha\in P\quad$ \textbf{iff} $\quad\alpha\in L_0$ $(*)$\end{tcolorbox}

\begin{table}[h]
\centering
\begin{tabular}{p{16.0 cm}}
\textbf{Proof: AP Module}\\
\toprule
\textit{Module updates step-by-step:}\\
\toprule
(\textbf{Initialize:} Load $\vec{\xi}$ for $\varphi$ and $\vec{\xi}_1$ for child node $\varphi_1$.)\\
\textbf{If} $\alpha\in L_{t'}$: \textbf{Set} $\xi[t']= 1$.

\begin{itemize}
    \item By $(*)$, $\alpha\in L_{t'}$ implies that $\rho^{t'...t^+}\models\varphi$ for all $t^+\geq t'$. \textbf{(h)} ($\xi_{t'}[t']=1$)
\end{itemize}

\textbf{Else}: \textbf{Set} $\xi[t']= 0$.

\begin{itemize}
    \item By $(*)$, $\alpha\not\in L_{t'}$ implies that $\rho^{t'...t^+}\not\models\varphi$ for all $t^+\geq t'$. \textbf{(v)} ($\xi_{t'}[t']=0$)
\end{itemize}\\

\bottomrule
\end{tabular}
\end{table}
\clearpage

\subsection{Not}

\begin{table}[h]
\centering
\begin{tabular}{lp{13.0 cm}}
\toprule
\textbf{Not} $(neg)$& LTL: $\varphi=\neg\varphi_1$\\
\midrule
Initialize: & Load $\vec{\xi}$ for $\varphi$ and $\vec{\xi}_1$ for child node $\varphi_1$.\\
For $t=T_0,...,t'$:&If $\xi_1[t]=0$:\quad Set $\xi[t]= 1$.\\
    &Else if $\xi_1[t]=1$:\quad Set $\xi[{t}] = 0$.\\

\bottomrule\\
\end{tabular}
\caption{\label{tab:negmoduleLPT} Not module definition.}
\end{table}
\FloatBarrier

\begin{tcolorbox}[colback=gray!10!white,colframe=white!80!black,title=\color{black}{Definition \ref{FiniteTraceDef}: Formula Evaluation on Finite Traces (neg)}]
    LTL formula $\varphi$ is true on finite trace $\rho^{t_0...} = (L_{t_0},...,L_{T_f})$, denoted $\rho^{t_0...}\models\varphi$, if $T_0\leq t_0 \leq T_f$ \textbf{and}:
        \par $\rho^{t_0...}\models \neg \varphi\quad$ \textbf{iff} $\quad\rho^{t_0...}\not\models\varphi$ $(*)$
\end{tcolorbox}

\begin{table}[h]
\centering
\begin{tabular}{p{16.0 cm}}
\textbf{Proof: Not Module}\\

\toprule
\textit{Module updates step-by-step:}\\
\toprule
(\textbf{Initialize:} Load $\vec{\xi}$ for $\varphi$ and $\vec{\xi}_1$ for child node $\varphi_1$.)\\

\textbf{For} $t=T_0,...,t'$:\\
\\
\quad \textbf{If} $\xi_1[t]=0$: \textbf{Set} $\xi[t]= 1$.
\begin{itemize}
    \item By the loop from $T_0,...,t'$, $t$ always satisfies $t\leq t'$ for current update time $t'$.
    \item $\xi_1[t]=0$ at $t'$ means that $\rho^{t...t^+}\not\models\varphi_1$ for all $t^+\geq t'$.
    \item By $(*)$, this is equivalent to $\rho^{t...t^+}\models\varphi$ for all $t^+\geq t'$. \textbf{(h)} ($\xi_{t'}[t]$)
\end{itemize}

\quad \textbf{Else if} $\xi_1[t]=1$: \textbf{Set} $\xi[{t}] = 0$.
\begin{itemize}
    \item $\xi_1[t]=1$ at $t'$ means that $\rho^{t...t^+}\models\varphi_x$ for all $t^+\geq t'$.
    \item Then by $(*)$, $\rho^{t...t^+}\not\models\varphi$ for all $t^+\geq t'$. \textbf{(v)} ($\xi_{t'}[t]$)
\end{itemize}\\

\bottomrule
\end{tabular}
\end{table}
\clearpage

\subsection{And}

\begin{table}[h]
\centering
\begin{tabular}{lp{13.0 cm}}
\toprule
\textbf{And} $(and)$&LTL: $\varphi=\varphi_1\wedge\varphi_2$\\
\midrule
Initialize: & Load $\vec{\xi}$ for $\varphi$ and $\vec{\xi}_1,\vec{\xi}_2$ for child nodes $\varphi_1,\varphi_2$.\\
For $t=T_0,...,t'$:&If $\xi_1[t]=1$ and $\xi_2[t]=1$:\quad Set $\xi[t] = 1$. \\
    &Else if $\xi_1[t]=0$ or $\xi_2[t]=0$:\quad Set $\xi[t] = 0$.\\
\bottomrule\\
\end{tabular}
\caption{\label{tab:andmoduleLPT} And module definition.}
\end{table}
\FloatBarrier

\begin{tcolorbox}[colback=gray!10!white,colframe=white!80!black,title=\color{black}{Definition \ref{FiniteTraceDef}: Formula Evaluation on Finite Traces (and)}]
  LTL formula $\varphi$ is true on finite trace $\rho^{t_0...} = (L_{t_0},...,L_{T_f})$, denoted $\rho^{t_0...}\models\varphi$, if $T_0\leq t_0 \leq T_f$ \textbf{and}:
        \par$\rho^{t_0...} \models \varphi_1 \wedge \varphi_2\quad$ \textbf{iff} $\quad\rho^{t_0...}\models\varphi_1$ \textbf{and} $\rho^{t_0...}\models\varphi_2$ $(*)$
\end{tcolorbox}

\begin{table}[h]
\centering
\begin{tabular}{p{16.0 cm}}
\textbf{Proof: And Module}\\
\toprule
\textit{Module updates step-by-step:}\\
\toprule
(\textbf{Initialize:} Load $\vec{\xi}$ for $\varphi$ and $\vec{\xi}_1, \vec{\xi}_2$ for child nodes $\varphi_1,\varphi_2$.)\\

\textbf{For} $t=T_0,...,t'$:\\
\\
\quad \textbf{If} $\xi_1[t]=1$ \textbf{and} $\xi_2[t]=1$: \textbf{Set} $\xi[t]= 1$.
\begin{itemize}
    \item By the loop from $T_0,...,t'$, $t$ always satisfies $t\leq t'$ for current update time $t'$.
    \item $\xi_1[t], \xi_2[t]=1$ at current update time $t'$ means that $\rho^{t...t^+}\models\varphi_x$, $\forall x\in\{1,2\}$ for all $t^+\geq t'$.
    \item By $(*)$, $\varphi_1\wedge\varphi_2$ is true whenever $\rho^{t...t^+}\models\varphi_x$ $\forall x\in\{1,2\}$. 
    \item Since this is true for all $t^+\geq t'$, $\rho^{t...t^+}\models\varphi$ for all $t^+\geq t'$. \textbf{(h)} ($\xi_{t'}[t]$)
\end{itemize}

\quad \textbf{Else if} $\xi_1[t]=0$ \textbf{or} $\xi_2[t]=0$: \textbf{Set} $\xi[{t}] = 0$.
\begin{itemize}
    \item $\xi_x[t]=0$ at update $t'$ means that $\rho^{t...t^+}\not\models\varphi_x$, $x\in\{1,2\}$ for all $t^+\geq t'$.
    \item By $(*)$, $\varphi_1\wedge\varphi_2$ is not true whenever $\rho^{t...t^+}\not\models\varphi_x$ for some $x\in\{1,2\}$. 
    \item Since this is the case for all $t^+\geq t'$, $\rho^{t...t^+}\not\models\varphi$ for all $t^+\geq t'$. \textbf{(v)} ($\xi_{t'}[t]$)
\end{itemize}\\

\bottomrule
\end{tabular}
\end{table}
\clearpage


\subsection{Or}

\begin{table}[h]
\centering
\begin{tabular}{lp{13.0 cm}}
\toprule
\textbf{Or} $(or)$&LTL: $\varphi=\varphi_1\vee\varphi_2$\\
\midrule
Initialize: & Load $\vec{\xi}$ for $\varphi$ and $\vec{\xi}_1,\vec{\xi}_2$ for child nodes $\varphi_1,\varphi_2$.\\
For $t=T_0,...,t'$:&If $\xi_1[t]=1$ or $\xi_2[t]=1$:\quad Set $\xi[t] = 1$. \\
    &Else if $\xi_1[t]=0$ and $\xi_2[t]=0$:\quad Set $\xi[t] = 0$.\\

\bottomrule\\
\end{tabular}
\caption{\label{tab:ormoduleLPT} Or module definition.}
\end{table}
\FloatBarrier

\begin{tcolorbox}[colback=gray!10!white,colframe=white!80!black,title=\color{black}{Definition \ref{FiniteTraceDef}: Formula Evaluation on Finite Traces (or)}]
LTL formula $\varphi$ is true on finite trace $\rho^{t_0...} = (L_{t_0},...,L_{T_f})$, denoted $\rho^{t_0...}\models\varphi$, if $T_0\leq t_0 \leq T_f$ \textbf{and}:
 \par$\rho^{t_0...} \models \varphi_1\vee\varphi_2 \quad$ \textbf{where} $\quad \rho^{t_0...} \models \neg(\neg\varphi_1\wedge\neg\varphi_2).$
 \begin{itemize}
     \item The requirement $\rho^{t_0...}\models\neg\varphi_1\wedge\neg\varphi_2$ is equivalent to requiring $\rho^{t_0...}\not\models\varphi_1$ and $\rho^{t_0...}\not\models\varphi_2$ (by def. of $\wedge$ and $\neg$). This condition, $(\rho^{t_0...}\not\models\varphi_1$ and $\rho^{t_0...}\not\models\varphi_2)$ is false when $(\rho^{t_0...}\models\varphi_1)$ or $(\rho^{t_0...}\models\varphi_2)$ is true. The final condition is thus
 \end{itemize}
 \par $\rho^{t_0...} \models \varphi_1\vee\varphi_2 \quad$ \textbf{iff} $\quad \rho^{t_0...}\models\varphi_1$ \textbf{or} $\rho^{t_0...}\models\varphi_2$ $(*)$
\end{tcolorbox}

\begin{table}[h]
\centering
\begin{tabular}{p{16.0 cm}}
\textbf{Proof: Or Module}\\
\toprule
\textit{Module updates step-by-step:}\\
\toprule
(\textbf{Initialize:} Load $\vec{\xi}$ for $\varphi$ and $\vec{\xi}_1, \vec{\xi}_2$ for child nodes $\varphi_1,\varphi_2$.)\\

\textbf{For} $t=T_0,...,t'$:\\
\\
\quad \textbf{If} $\xi_1[t]=1$\textbf{ or} $\xi_2[t]=1$: \textbf{Set} $\xi[t]= 1$.
\begin{itemize}
    \item By the loop from $T_0,...,t'$, $t$ always satisfies $t\leq t'$ for current update time $t'$.
    \item $\xi_1[t]=1$ or $\xi_2[t]=1$ at current update time $t'$ means that $\rho^{t...t^+}\models\varphi_x$ for some $x\in\{1,2\}$ for all $t^+\geq t'$.
    \item By $(*)$, $\varphi_1\vee\varphi_2$ is true whenever $\rho^{t...t^+}\models\varphi_x$ for some $x\in\{1,2\}$. 
    \item Since this is true for all $t^+\geq t'$, $\rho^{t...t^+}\models\varphi$ for all $t^+\geq t'$. \textbf{(h)} ($\xi_{t'}[t]$)
\end{itemize}

\quad \textbf{Else if} $\xi_1[t]=0$ \textbf{and} $\xi_2[t]=0$: \textbf{Set} $\xi[{t}] = 0$.
\begin{itemize}
    \item $\xi_1[t], \xi_2[t]=0$ at update $t'$ means that $\rho^{t...t^+}\not\models\varphi_x$, $\forall x\in\{1,2\}$ for all $t^+\geq t'$.
    \item By $(*)$, $\varphi_1\vee\varphi_2$ is not true whenever $\rho^{t...t^+}\not\models\varphi_x$ $\forall x\in\{1,2\}$. 
    \item Since this is the case for all $t^+\geq t'$, $\rho^{t...t^+}\not\models\varphi$ for all $t^+\geq t'$. \textbf{(v)} ($\xi_{t'}[t]$)
\end{itemize}\\

\bottomrule
\end{tabular}
\end{table}
\clearpage

\subsection{Implication}

\begin{table}[h]
\centering
\begin{tabular}{lp{13.0 cm}}
\toprule
\textbf{Implication}  $(\rightarrow)$&LTL: $\varphi=\varphi_1\rightarrow\varphi_2$\\
\midrule
Initialize: & Load $\vec{\xi}$ for $\varphi$ and $\vec{\xi}_1,\vec{\xi}_2$ for child nodes $\varphi_1,\varphi_2$.\\
For $t=T_0,...,t'$:&If $\xi_1[t]=0$ or $\xi_2[t]=1$:\quad Set $\xi[t] = 1$. \\
    &Else if $\xi_1[t]=1$ and $\xi_2[t]=0$:\quad Set $\xi[t] = 0$.\\
\bottomrule\\
\end{tabular}
\caption{\label{tab:implmoduleLPT} Implication module definition.}
\end{table}
\FloatBarrier

\begin{tcolorbox}[colback=gray!10!white,colframe=white!80!black,title=\color{black}{Definition \ref{FiniteTraceDef}: Formula Evaluation on Finite Traces ($\rightarrow$)}]
LTL formula $\varphi$ is true on finite trace $\rho^{t_0...} = (L_{t_0},...,L_{T_f})$, denoted $\rho^{t_0...}\models\varphi$, if $T_0\leq t_0 \leq T_f$ \textbf{and}:
 \par$\rho^{t_0...} \models \varphi_1\rightarrow\varphi_2 \quad$ where $\quad \rho^{t_0...} \models \neg\varphi_1\vee\varphi_2$

\begin{itemize}
    \item  We see that $\rho^{t_0...} \models \neg\varphi_1\vee\varphi_2$ when $\rho^{t_0...}\models\neg\varphi_1$ or $\rho^{t_0...}\models\varphi_2$ (by def. of $\vee$). Thus,
\end{itemize}

  \par$\rho^{t_0...} \models \varphi_1\rightarrow\varphi_2 \quad$ \textbf{iff} $\quad\rho^{t_0...}\not\models\varphi_1$ \textbf{or} $\rho^{t_0...}\models\varphi_2$ $(*)$
\end{tcolorbox}

\begin{table}[h]
\centering
\begin{tabular}{p{16.0 cm}}
\textbf{Proof: Implication Module}\\
\toprule
\textit{Module updates step-by-step:}\\
\toprule
(\textbf{Initialize:} Load $\vec{\xi}$ for $\varphi$ and $\vec{\xi}_1,\vec{\xi}_2$ for child nodes $\varphi_1,\varphi_2$.)\\

\textbf{For} $t=T_0,...,t'$:\\
\\
\quad \textbf{If} $\xi_1[t]=0$ \textbf{or} $\xi_2[t]=1$: \textbf{Set} $\xi[t] = 1$.
\begin{itemize}
    \item By the loop from $T_0,...,t'$, $t$ always satisfies $t\leq t'$ for current update time $t'$.
    \item $\xi_1[t]=0$ at current update time $t'$ means that $\rho^{t...t^+}\not\models\varphi_1$ for all $t^+\geq t'$. By $(*)$, this is a sufficient condition for $\rho^{t...t^+}\models\varphi$.
    \item Since this is the case for all $t^+\geq t'$, $\rho^{t...t^+}\models\varphi$ for all $t^+\geq t'$. \textbf{(h)} ($\xi_{t'}[t]$)
    \item $\xi_2[t]=1$ at $t'$ means that $\rho^{t...t^+}\models\varphi_2$ for all $t^+\geq t'$. By $(*)$, this is another sufficient condition for $\rho^{t...t^+}\models\varphi$.
    \item Since this is the case for all $t^+\geq t'$, $\rho^{t...t^+}\models\varphi$ for all $t^+\geq t'$. \textbf{(h)} ($\xi_{t'}[t]$)
\end{itemize}

\quad \textbf{Else if} $\xi_1[t]=1$ \textbf{and} $\xi_2[t]=0$: \textbf{Set} $\xi[t] = 0$.
    \begin{itemize}
    \item $\xi_1[t]=1$ at current update time $t'$ means that $\rho^{t...t^+}\models\varphi_1$ for all $t^+\geq t'$. 
    \item $\xi_2[t]=0$ at $t'$ means that $\rho^{t...t^+}\not\models\varphi_2$ for all $t^+\geq t'$. 
    \item By $(*)$, this means that $\rho^{t...t^+}\not\models\varphi$ for all $t^+\geq t'$. \textbf{(v)} ($\xi_{t'}[t]$)
\end{itemize}\\
\bottomrule
\end{tabular}
\end{table}

\clearpage
\section{Temporal modules}

Likewise, this section proves that, given update $t'$ and valid tracking value vectors for each child node of some $\varphi$, the temporal modules assign a valid tracking value vector for $\varphi$ at $t'$.

\subsection{Next}

\begin{table}[h]
\centering
\begin{tabular}{lp{13.0 cm}}
\toprule
\textbf{Next}  $(X)$&LTL: $\varphi=\mathbf{X}\varphi_1$\\
\midrule
Initialize: & Load $\vec{\xi}$ for $\varphi$ and $\vec{\xi}_1$ for child node $\varphi_1$.\\
For $t=T_0,...,t'$:&If $\xi_1[t] = 1$, $t>T_0$:\quad Set $\xi[t-1] = 1$.\\
&Else if $\xi_1[t] = 0$, $t>T_0$\textbf{ or }$t'=T_f$: \quad Set $\xi[t-1] = 0$.\\
\bottomrule\\
\end{tabular}
\caption{\label{tab:XmoduleLPT} Next module definition.}
\end{table}
\FloatBarrier

\begin{tcolorbox}[colback=gray!10!white,colframe=white!80!black,title=\color{black}{Definition \ref{FiniteTraceDef}: Formula Evaluation on Finite Traces (X)}]
LTL formula $\varphi$ is true on finite trace $\rho^{t_0...} = (L_{t_0},...,L_{T_f})$, denoted $\rho^{t_0...}\models\varphi$, if $T_0\leq t_0 \leq T_f$ \textbf{and}:
 \par $\rho^{t_0...} \models \mathbf{X}\varphi\quad$ \textbf{iff} $\quad \rho^{t_0+1...}\models\varphi$ \textbf{and} $t_0<T_f$ $(*)$
\end{tcolorbox}

\begin{table}[h]
\centering
\begin{tabular}{p{16.0 cm}}
\textbf{Proof: Next Module}\\
\toprule
\textit{Module updates step-by-step:}\\
\toprule
(\textbf{Initialize:} Load $\vec{\xi}$ for $\varphi$ and $\vec{\xi}_1$ for child node $\varphi_1$.)\\

\textbf{For} $t=T_0,...,t'$:\\
\\
\quad \textbf{If} $\xi_1[t] = 1$, $t>T_0$: \textbf{Set} $\xi[t-1] = 1$.
\begin{itemize}
    \item By the loop from $T_0,...,t'$, $t$ always satisfies $t\leq t'$ for current update time $t'$.
    \item $\xi_1[t]=1$ at $t'$ means that $\rho^{t...t^+}\models\varphi_1$ for all $t^+\geq t'$.
    \item Taking $t_0=t-1$, this is equivalent to $\rho^{t-1...t^+}\models\varphi$ by $(*)$ for all $t^+\geq t'$. \textbf{(h)} ($\xi_{t'}[t-1]$)
\end{itemize}

\quad \textbf{Else if} $\xi_1[t] = 0$, $t>T_0$\textbf{ or }$t'=T_f$: \textbf{Set} $\xi[t-1] = 0$.
\begin{itemize}
    \item $\xi_1[t]=0$ at $t'$ means that $\rho^{t...t^+}\not\models\varphi_1$ for all $t^+\geq t'$.
    \item Then by $(*)$, $\rho^{t-1...t^+}\not\models\varphi$ for all $t^+\geq t'$. \textbf{(v)} ($\xi_{t'}[t]$)
\end{itemize}\\

\bottomrule
\end{tabular}
\end{table}

\clearpage


\subsection{Eventual}

\begin{table}[h]
\centering
\begin{tabular}{lp{13.0 cm}}
\toprule
\textbf{Eventual} $(F)$&LTL: $\varphi=\mathbf{F}\varphi_1$\\
\midrule
Initialize: & Load $\vec{\xi}$ for $\varphi$ and $\vec{\xi}_1$ for child node $\varphi_1$.\\
For $t=T_0,...,t'$:&If $\xi_1[t]=1$:\quad For all $t_0$ s.t. $T_0\leq t_0 \leq t$: set $\xi[t_0]= 1$.\\
\bottomrule\\
\end{tabular}
\caption{\label{tab:FmoduleLPT} Eventual module definition.}
\end{table}
\FloatBarrier

\begin{tcolorbox}[colback=gray!10!white,colframe=white!80!black,title=\color{black}{Definition \ref{FiniteTraceDef}: Formula Evaluation on Finite Traces (F)}]
LTL formula $\varphi$ is true on finite trace $\rho^{t_0...} = (L_{t_0},...,L_{T_f})$, denoted $\rho^{t_0...}\models\varphi$, if $T_0\leq t_0 \leq T_f$ \textbf{and}:
 \par $\rho^{t_0...} \models \mathbf{F}\varphi \quad$ where $\quad \rho^{t_0...} \models true\ \mathbf{U}\varphi$

\begin{itemize}
    \item By definition, $\rho^{t_0...}\models true\  \mathbf{U}\varphi_2\quad$ iff $\quad \exists i\geq t_0$ s.t. $\rho^{i...}\models\varphi_2$ and $\rho^{k...} \models true$ for all $t_0\leq k < i$ and $i\leq T_f$. Clearly $\rho^{k...} \models true$ trivially, and thus
\end{itemize}

  \par $\rho^{t_0...} \models \mathbf{F}\varphi \quad$ \textbf{iff} $\quad \exists i\geq t_0$ s.t. $\rho^{i...}\models\varphi$ for $i\leq T_f$ $(*)$

\end{tcolorbox}

\begin{table}[h]
\centering
\begin{tabular}{p{16.0 cm}}
\textbf{Proof: Eventual Module}\\
\toprule
\textit{Module updates step-by-step:}\\
\toprule
(\textbf{Initialize:} Load $\vec{\xi}$ for $\varphi$ and $\vec{\xi}_1$ for child node $\varphi_1$.)\\

\textbf{For} $t=T_0,...,t'$:\\
\\
\quad \textbf{If} $\xi_1[t]=1$: \textbf{For all} $t_0$ s.t. $T_0\leq t_0 \leq t$: \textbf{set} $\xi[t_0]= 1$.
\begin{itemize}
    \item By the outer loop with $t=T_0,...,t'$, $t$ always satisfies $t\leq t'$ for current update time $t'$.
    \item $\xi_1[t]=1$ at $t'$ means that $\rho^{t...t^+}\models\varphi_1$ for all $t^+\geq t'$.
    \item Taking $i=t$ and $t_0\leq t$, $(*)$ gives that $\rho^{t_0...t^+}\models\varphi$ for all $t^+\geq t'$. \textbf{(h)} ($\xi_{t'}[t_0]$)
\end{itemize}\\

\bottomrule
\end{tabular}
\end{table}
\clearpage

\subsection{Global}

\begin{table}[h]
\centering
\begin{tabular}{lp{13.0 cm}}
\toprule
\textbf{Global} $(G)$&LTL: $\varphi=\mathbf{G}\varphi_1$\\
\midrule
Initialize: & Load $\vec{\xi}$ for $\varphi$ and $\vec{\xi}_1$ for child node $\varphi_1$. Set $t=t'$.\\
While $t\geq T_0$:& If $\xi_1[t]=0$:\quad For all $t_0$ s.t. $T_0\leq t_0 \leq t$: set $\xi[t_0]= 0$. Set $t=T_0-1$.\\
& Else: set $t=t-1$.\\

\bottomrule\\
\end{tabular}
\caption{\label{tab:GmoduleLPT} Global module definition.}
\end{table}
\FloatBarrier

\begin{tcolorbox}[colback=gray!10!white,colframe=white!80!black,title=\color{black}{Definition \ref{FiniteTraceDef}: Formula Evaluation on Finite Traces (G)}]
LTL formula $\varphi$ is true on finite trace $\rho^{t_0...} = (L_{t_0},...,L_{T_f})$, denoted $\rho^{t_0...}\models\varphi$, if $T_0\leq t_0 \leq T_f$ \textbf{and}:
 \par$\rho^{t_0...} \models \mathbf{G}\varphi \quad$ where $\quad \rho^{t_0...} \models \neg\mathbf{F}\neg\varphi$

\begin{itemize}
    \item We have that $\rho^{t_0...}\models\neg\mathbf{F}\psi$ iff $\not\exists i\geq t_0$ s.t. $\rho^{i...}\models\varphi$ for $i\leq T_f$. If this is true, it must be true that $\rho^{i...}\not\models\psi$ for all $i$ where $t_0\leq i\leq T_f$. Equivalently, $\rho^{i...}\models\neg\psi$ for all $t_0\leq i\leq T_f$. 
    \item Taking $\psi = \neg\varphi,$
\end{itemize}
 
 \par $\rho^{t_0...} \models \mathbf{G}\varphi \quad$ \textbf{iff} $\quad \rho^{i...}\models\varphi$ for all $i$ where $t_0\leq i\leq T_f$ $(*)$

\end{tcolorbox}

\begin{table}[h]
\centering
\begin{tabular}{p{16.0 cm}}
\textbf{Proof: Global Module}\\
\toprule
\textit{Module updates step-by-step:}\\
\toprule
(\textbf{Initialize:} Load $\vec{\xi}$ for $\varphi$ and $\vec{\xi}_1$ for child node $\varphi_1$.)\\

\textbf{While} $t\geq T_0$:\\
\\
\quad\textbf{If} $\xi_1[t]=0$: \textbf{For all} $t_0$ s.t. $T_0\leq t_0 \leq t$: \textbf{set} $\xi[t_0]= 0$. \textbf{Set} $t=T_0-1$.
\begin{itemize}
    \item By the outer loop, $t$ begins at $t=t'$. 
    \item $\xi_1[t]=0$ at $t'$ means that $\rho^{t...t^+}\not\models\varphi_1$ for all $t^+\geq t'$.
    \item Take $i=t$. Then $(*)$ gives that $\rho^{t...t^+}\not\models\varphi$ for all $t^+\geq t'$. \textbf{(v)} ($\xi_{t'}[t]=0$)
    \item Moreover, the same selection of $i=t$ satisfies $t_0\leq i$ for all $t_0$ in $\{T_0,...,t\}$. Therefore, from $(^*)$, $\rho^{t_0...t^+}\not\models\varphi$ for all $t^+\geq t'$ \textbf{(v)} for every $\xi_{t'}[t_0]$ as well.
\end{itemize}\\
\quad \textbf{Else:} \textbf{set} $t=t-1$.
\begin{itemize}
    \item (No assignment of $\vec{\xi}$ is made here)
\end{itemize}\\

\bottomrule
\end{tabular}
\end{table}

\clearpage

\pagebreak
\subsection{Until}
\begin{algorithm}[h]
\SetAlgoLined
\SetKwFunction{getUpdates}{getUpdates}

\nl\textbf{Initialize:} Load $\vec{\xi}$ for $\varphi$ and $\vec{\xi}_1,\vec{\xi}_2$ for child nodes $\varphi_1,\varphi_2$.\\
\nl Set  $t_0,t= T_0$.

\nl \While{$t\leq t'$}{
    \nl\nllabel{lin:uline4}\uIf{$\xi_1[t]=0$ and $\xi_2[t]=0$}{
        \nl\nllabel{lin:uline5}Set $\xi[t] = 0$\\
        \nl\nllabel{lin:uline6}Set $t = t+1$\\
        \nl\nllabel{lin:uline7}Set $t_0 = t$
    }
    \nl\nllabel{lin:uline8}\uElseIf{$\xi_2[t] = 1$}{
        \nl\nllabel{lin:uline9}Set $\xi[t_0],...,\xi[t] = 1$\\
        \nl\nllabel{lin:uline10}Set $t = t+1$\\
        \nl\nllabel{lin:uline11}Set $t_0 = t$
    }
    \nl\nllabel{lin:uline12}\uElseIf{$\xi_1[t] = 0$ or $\xi_1[t] = -1$}{
        \nl\nllabel{lin:uline13}Set $t = t+1$\\
        \nl\nllabel{lin:uline14}Set $t_0 = t$
    }
    \nl\nllabel{lin:uline15}\uElse{
        \nl\nllabel{lin:uline16}Set $t = t+1$
    }
}
\caption{Until Module for LPT.}
\end{algorithm}
\FloatBarrier

\begin{tcolorbox}[colback=gray!10!white,colframe=white!80!black,title=\color{black}{Definition \ref{FiniteTraceDef}: Formula Evaluation on Finite Traces (U)}]
LTL formula $\varphi$ is true on finite trace $\rho^{t_0...} = (L_{t_0},...,L_{T_f})$, denoted $\rho^{t_0...}\models\varphi$, if $T_0\leq t_0 \leq T_f$ \textbf{and}:
 \par$\rho^{t_0...}\models\varphi_1\mathbf{U}\varphi_2\quad$ \textbf{iff} $\quad \exists i\geq t_0$ s.t. $\rho^{i...}\models\varphi_2$ \textbf{and} $\rho^{k...} \models \varphi_1$ for all $k$ s.t. $t_0\leq k < i$ \textbf{and} $i\leq T_f$ $(*)$

\end{tcolorbox}

\begin{table}[h]
\centering
\begin{tabular}{p{16.0 cm}}
\textbf{Proof: Until Module}\\
\toprule
\textit{Module updates step-by-step:}\\
\toprule
(\textbf{Initialize:}  Load $\vec{\xi}$ for $\varphi$ and $\vec{\xi}_1,\vec{\xi}_2$ for child nodes $\varphi_1,\varphi_2$.
\textbf{Set} $t_0,t=T_0$.)\\

\textbf{While} $t\leq t'$:\\
\\
\quad \textbf{If $\xi_1[t]=0$ and $\xi_2[t]=0$: Set $\xi[t]=0$. Set $t=t+1$. Set $t_0=t$.} (Lines \ref{lin:uline4}-\ref{lin:uline7})
\begin{itemize}
    \item Consider $(*)$, taking $k=t$. Then $\xi_1[k]=0$ at $t'$ means that $\rho^{k...t^+}\not\models\varphi_1$ for all $t^+\geq t'$. 
    \item For any $i$ at which $\rho^{i...t^+}\models\varphi_2$, it is thus clearly not true that $\rho^{k...t^+}\models\varphi_1$ for all $t_0\leq k < i$ for any choice of $t_0\leq k$ \textbf{(v)}.
\end{itemize}

\quad \textbf{Else if $\xi_2[t]=1$: Set $\xi[t_0],...,\xi[t]=1$. Set $t=t+1$. Set $t_0=t$.} (Lines \ref{lin:uline8}-\ref{lin:uline11})
\begin{itemize}
    \item Consider $(*)$ again, taking $i=t$. Since $\xi_2[i]=1$ at $t'$ means that $\rho^{i...t^+}\models\varphi_2$ for all $t^+\geq t'$, the first condition of $(*)$ is met. 
    \item The remaining condition requires that $\rho^{k...t^+}\models\varphi_1$ for all $t_0\leq k < i$.
    \item Suppose this is uncertain for some $k$. Then $\xi_1[k]\neq 1$. However, by the module, whenever $\xi_1[k]\neq 1$, $t_0$ is set so that $t_0=t$.
    \item As $t$ is strictly increasing, this means either $t_0=t$ or $\xi_1[k]=1$ $\forall k$ s.t. $t_0\leq k < i$. Thus, the conditions of $(*)$ are indeed met, and the assignment $\xi[k]= 1$ must be correct for all $k$ in $\{t_0,...,t\}$. \textbf{(h)}
\end{itemize}

\quad\textbf{Else if $\xi_1[t] = 0$ or $\xi_1[t] = -1$: Set $t=t+1$. Set $t_0=t$} (Lines \ref{lin:uline12}-\ref{lin:uline14})
\begin{itemize}
    \item (No assignment of $\vec{\xi}$ is made here)
\end{itemize}

\quad\textbf{Else: Set $t=t+1$.} (Lines \ref{lin:uline15}-\ref{lin:uline16})
\begin{itemize}
    \item (No assignment of $\vec{\xi}$ is made here)
\end{itemize}\\

\bottomrule
\end{tabular}
\end{table}
\clearpage

\pagebreak
\subsection{Weak until}

\begin{tcolorbox}[colback=gray!10!white,colframe=white!80!black,title=\color{black}{Definition \ref{FiniteTraceDef}: Formula Evaluation on Finite Traces (W)}]
LTL formula $\varphi$ is true on finite trace $\rho^{t_0...} = (L_{t_0},...,L_{T_f})$, denoted $\rho^{t_0...}\models\varphi$, if $T_0\leq t_0 \leq T_f$ \textbf{and}:
 \par$\rho^{t_0...} \models \varphi_1\mathbf{W}\varphi_2 \quad$ where $\quad \rho^{t_0...} \models (\varphi_1\mathbf{U}\varphi_2)\vee\mathbf{G}\varphi_1$

\begin{itemize}
    \item We have that $\rho^{t_0...}\models (\varphi_1\mathbf{U}\varphi_2) \vee \mathbf{G}\varphi_1$ if $\rho^{t_0...} \models \mathbf{G}\varphi_1$ \textbf{or} $\rho^{t_0...}\models  \varphi_1\mathbf{U}\varphi_2$. The latter condition is true iff $\exists i\geq t_0$ s.t. $\rho^{i...}\models\varphi_2$ and $\rho^{k...} \models \varphi_1$ for all $t_0\leq k < i$ and $i\leq T_f$ (by def. of $\mathbf{U}$). Thus,
\end{itemize}

 \par $\rho^{t_0...} \models \varphi_1\mathbf{W}\varphi_2 \quad$ \textbf{iff }$\quad (\exists i\geq t_0$ s.t. $\rho^{i...}\models\varphi_2$ \textbf{and} $\rho^{k...} \models \varphi_1$ for all $t_0\leq k < i$ \textbf{and} $i\leq T_f)$ $(*)$ \textbf{or} $(\rho^{i...}\models\mathbf{G}\varphi_1$ for all $i$ where $t_0\leq i\leq T_f)$ $(**)$

\end{tcolorbox}

\begin{proof}
The proof for the Weak Until module is identical to that for the Until module above. This is a consequence of the fact that the condition $(*)$ above is the same as that for Until. 
\par The two operators are distinct only in their handling of the terminal time step $T_f$ of $\rho^{t_0...T_f}$; in particular, for Weak Until,
$$(\rho^{t_0...T_f}\models\mathbf{G}\varphi_1)\Rightarrow (\rho^{t_0...T_f}\models\varphi)$$
which means that $\rho^{T_f...T_f}\models\varphi_2$ is not necessary for satisfaction. This is different from Until, for which $(*)$ is the only satisfying condition.
\par The difference in requirement is handled by the \textbf{terminal update} step of the LPT algorithm, reproduced here:
\par \textbf{If} $t'=T_f$, perform \textbf{terminal evaluation} for all tree nodes $\phi$ of $\varphi$. For all $t\in\{T_0,...,T_f\}$, if $\xi[t]=-1$, set
    \begin{align}\label{eq:terminal_eval1}
        \xi[t]=
        \begin{cases}
            1&\rho^{t...T_f}\models\phi\\
            0&\rho^{t...T_f}\not\models\phi
        \end{cases}
    \end{align}
By this update, any $\xi[t] = -1$ are assigned appropriate truth values by the definition for $\phi$ on the complete trace; this is trivially possible, since the entire trace is known at $t'=T_f$, and equates to traditional LTL evaluation. This update will affect the only difference in evaluation of $\vec{\xi}$ between $\mathbf{W}$ and $\mathbf{U}$. If $\rho^{t...T_f}$ does not meet the stricter requirement, this last update sets $\xi[t] = 0$ for $\mathbf{U}$, but $\xi[t] = 1$ for $\mathbf{W}$.
\end{proof}
\clearpage

\pagebreak
\subsection{Strong release}

\begin{algorithm}[h]
\SetAlgoLined
\SetKwFunction{getUpdates}{getUpdates}

\nl\textbf{Initialize:} Load $\vec{\xi}$ for $\varphi$ and $\vec{\xi}_1,\vec{\xi}_2$ for child nodes $\varphi_1,\varphi_2$.\\
\nl Set  $t_0,t= T_0$.

\nl \While{$t\leq t'$}{
    \nl\nllabel{lin:mline4}\uIf{$\xi_[t]=0$}{
        \nl\nllabel{lin:mline5}Set $\xi[t] = 0$\\
        \nl\nllabel{lin:mline6}Set $t = t+1$\\
        \nl\nllabel{lin:mline7}Set $t_0 = t$
    }
    \nl\nllabel{lin:mline8}\uElseIf{$\xi_1[t] = 1$ and $\xi_2[t] = 1$}{
        \nl\nllabel{lin:mline9}Set $\xi[t_0],...,\xi[t] = 1$\\
        \nl\nllabel{lin:mline10}Set $t = t+1$\\
        \nl\nllabel{lin:mline11}Set $t_0 = t$
    }
    \nl\nllabel{lin:mline12}\uElseIf{$\xi_2[t] = -1$}{
        \nl\nllabel{lin:mline13}Set $t = t+1$\\
        \nl\nllabel{lin:mline14}Set $t_0 = t$
    }
    \nl\nllabel{lin:mline15}\uElse{
        \nl\nllabel{lin:mline16}Set $t = t+1$
    }
}
\caption{Release/Strong Release Module for LPT.}
\end{algorithm}

\begin{tcolorbox}[colback=gray!10!white,colframe=white!80!black,title=\color{black}{Definition \ref{FiniteTraceDef}: Formula Evaluation on Finite Traces (M)}]
LTL formula $\varphi$ is true on finite trace $\rho^{t_0...} = (L_{t_0},...,L_{T_f})$, denoted $\rho^{t_0...}\models\varphi$, if $T_0\leq t_0 \leq T_f$ \textbf{and}:
 \par $\rho^{t_0...} \models \varphi_1\mathbf{M}\varphi_2 \quad$ where $\quad \rho^{t_0...} \models \varphi_2 \mathbf{U}(\varphi_1\wedge\varphi_2)$

\begin{itemize}
    \item We have that $\rho^{t_0...}\models \varphi_2 \mathbf{U}(\varphi_1\wedge\varphi_2)$ iff $\exists i,t_0\leq i\leq T_f$ s.t. $\rho^{i...}\models(\varphi_1 \wedge \varphi_2)$ and $\rho^{k...} \models \varphi_2$ for all $k$ s.t. $t_0\leq k < i)$ (by def. of $\mathbf{U}$). We can rewrite this:
\end{itemize}
 
 \par $\rho^{t_0...}\models\varphi_1 \mathbf{M} \varphi_2\quad$ \textbf{iff} $\quad\exists i,t_0\leq i\leq T_f$ s.t. $(\rho^{i...}\models \varphi_1) \wedge (\rho^{k...} \models \varphi_2$ for all $k$ s.t. $t_0\leq k \leq i)$ $(*)$

\end{tcolorbox}

\begin{table}[h]
\centering
\begin{tabular}{p{16.0 cm}}
\textbf{Proof: Strong Release Module}\\
\toprule
\textit{Module updates step-by-step:}\\
\toprule
(\textbf{Initialize:}  Load $\vec{\xi}$ for $\varphi$ and $\vec{\xi}_1,\vec{\xi}_2$ for child nodes $\varphi_1,\varphi_2$.
\textbf{Set} $t_0,t=T_0$.)\\

\textbf{While} $t\leq t'$:\\
\\
\quad \textbf{If $\xi_2[t]=0$: Set $\xi[t]=0$. Set $t=t+1$. Set $t_0=t$.} (Lines \ref{lin:mline4}-\ref{lin:mline7})
\begin{itemize}
    \item Consider $(*)$, taking $i=t$. Then $\xi_2[i]=0$ at $t'$ means that $\rho^{i...t^+}\not\models\varphi_1$ for all $t^+\geq t'$. 
    \item This violates the second condition of $(*)$, which requires that $\rho^{i...t^+}\not\models\varphi_1$ $\forall k, t_0\leq k \leq i$ (where $t_0$ is an arbitrary value $t_0\leq k$). \textbf{(v)} 
\end{itemize}

\quad \textbf{Else if $\xi_1[t]=1$ and $\xi_2[t]=1$: Set $\xi[t_0],...,\xi[t]=1$. Set $t=t+1$. Set $t_0=t$.} (Lines \ref{lin:mline8}-\ref{lin:mline11})
\begin{itemize}
    \item Consider $(*)$ again, taking $i=t$. Since $\xi_1[i]=1$ at $t'$ means that $\rho^{i...t^+}\models\varphi_1$ for all $t^+\geq t'$, one condition of $(*)$ is met for that $i$. 
    \item The remaining condition requires that $\rho^{k...t^+}\models\varphi_2$ for all $t_0\leq k \leq i$.
    \item Suppose this is uncertain for some $k$. Then $\xi_2[k]\neq 1$. However, by the module, whenever $\xi_1[k]\neq 1$, $t_0$ is set so that $t_0=t$.
    \item As $t$ is strictly increasing, this means either $t_0=t$ or $\xi_2[k]=1$ $\forall k$ s.t. $t_0\leq k \leq i$. Thus, the second condition of $(*)$ is indeed met, and the assignment $\xi[k]= 1$ must be correct for all $k$ in $\{t_0,...,t\}$. \textbf{(h)}
\end{itemize}

\quad\textbf{Else if $\xi_2[t] = -1$: Set $t=t+1$. Set $t_0=t$} (Lines \ref{lin:mline12}-\ref{lin:mline14})
\begin{itemize}
    \item (No assignment of $\vec{\xi}$ is made here)
\end{itemize}

\quad\textbf{Else: Set $t=t+1$.} (Lines \ref{lin:mline15}-\ref{lin:mline16})
\begin{itemize}
    \item (No assignment of $\vec{\xi}$ is made here)
\end{itemize}\\

\bottomrule
\end{tabular}
\end{table}
\clearpage

\pagebreak
\subsection{Release}

\begin{tcolorbox}[colback=gray!10!white,colframe=white!80!black,title=\color{black}{Definition \ref{FiniteTraceDef}: Formula Evaluation on Finite Traces (R)}]
LTL formula $\varphi$ is true on finite trace $\rho^{t_0...} = (L_{t_0},...,L_{T_f})$, denoted $\rho^{t_0...}\models\varphi$, if $T_0\leq t_0 \leq T_f$ \textbf{and}:
 \par$\rho^{t_0...} \models \varphi_1\mathbf{R}\varphi_2 \quad$ where $\quad \rho^{t_0...} \models \neg(\neg\varphi_1\mathbf{U}\neg\varphi_2)$

 \begin{itemize}
     \item Consider $\neg(\psi_1\mathbf{U}\psi_2)$. This is true on $\rho^{t_0...}$ iff $\neg(\exists i\geq t_0$ s.t. $\rho^{i...}\models\psi_2$ and $\rho^{k...} \models \psi_1$ for all $k$ s.t. $t_0\leq k < i)$ (by def. of $\mathbf{U}$) and $i\leq T_f$. The former is equivalent to requiring $\not\exists i\geq t_0$ s.t. ($\rho^{i...}\models\psi_2$ and $\rho^{k...} \models \psi_1$ for all $i,k$ where $t_0\leq k < i$). 
     \item This means that for all $i$ with $\rho^{i...}\models\psi_2$ (if such $i$ exist), there exists some $k$ with $t_0\leq k < i$ such that $\rho^{k...}\not\models\psi_1$. More formally, $\neg(\psi_1\mathbf{U}\psi_2)$ iff for all $i$ s.t. $t_0\leq i\leq T_f$, $\neg(\rho^{i...}\models\psi_2)\vee (\exists k<i$ s.t. $\rho^{k...}\not\models\psi_1)$.
     \item Substitute $\psi_1 := \neg\varphi_1$ and $\psi_2 := \neg\varphi_2$. Then $\rho^{t_0...}\models\neg(\neg\varphi_1 \mathbf{U} \neg\varphi_2)$ iff for all $i$ s.t. $t_0\leq i\leq T_f$, $\neg(\rho^{i...}\not\models\varphi_2)\vee (\exists k<i$ s.t. $\rho^{k...}\models\varphi_1)$, which yields
 \end{itemize}

 \par$\rho^{t_0...}\models\varphi_1 \mathbf{R} \varphi_2\quad$ \textbf{iff} $\quad$ for all $i$ s.t. $t_0\leq i\leq T_f$, $(\rho^{i...}\models\varphi_2)$ $(*)$ or $ (\exists k<i$ s.t. $\rho^{k...}\models\varphi_1)$ $(**)$

\end{tcolorbox}

\begin{proof} 
The proof for the Release module is identical to that for the Strong Release module above. The difference in conditions between these two operators is expressed in condition $(*)$ above; Release allows

$$(\rho^{i...T_f}\models\varphi_2\quad\forall i, t_0\leq i\leq T_f)\Rightarrow (\rho^{t_0...T_f}\models\varphi).$$

For Strong Release, there is a stricter requirement:
$$\rho^{i...}\models\varphi_1$$
for some $i$ such that $t_0\leq i$. Once again, the difference in requirement is handled by the \textbf{terminal update} step of the LPT algorithm.
This update will assign any $\xi[t_0] = -1$ where $\rho^{t_0...T_f}$ does not meet the stricter requirement to (v) for $\mathbf{M}$, but (h) for $\mathbf{R}$.
\end{proof}

\end{document}